\newtheorem{theorem}{Theorem}[section]
\newtheorem{corollary}[theorem]{Corollary}
\newtheorem{lemma}[theorem]{Lemma}
\theoremstyle{definition}
\newtheorem{definition}[theorem]{Definition}
\newtheorem{remark}[theorem]{Remark}
\newcommand{\bfx}{{\bf x}}
\newcommand{\bfy}{{\bf y}}
\begin{document}

\title{Power Weighted Shortest Paths for Clustering Euclidean Data}
\author[1]{Daniel Mckenzie \thanks{Corresponding Author: mckenzie@math.ucla.edu}}
\author[2]{Steven Damelin \thanks{damelin@umich.edu}}
\affil[1]{Department of Mathematics, University of California, Los Angeles}
\affil[2]{Department of Mathematics, University of Michigan}

\maketitle

\begin{abstract}
We study the use of power weighted shortest path metrics for clustering high dimensional Euclidean data, under the assumption that the data is drawn from a collection of disjoint low dimensional manifolds. We argue, theoretically and experimentally, that this leads to higher clustering accuracy. We also present a fast algorithm for computing these distances.
\end{abstract}

{\bf Keywords:} clustering, shortest path distance, manifold hypothesis, unsupervised learning.

\section{Introduction}
Clustering high dimensional data is an increasingly important problem in contemporary unsupervised machine learning. Here, we shall consider this problem in the case where our data is presented as a subset of a Euclidean space, $\mathcal{X}\subset \mathbb{R}^{D}$, although our results easily extend to more general metric spaces. Loosely speaking, by clustering we mean partitioning $\mathcal{X}$ into $\ell$ subsets, or clusters, $\mathcal{X} = \mathcal{X}_{1}\cup\ldots\cup\mathcal{X}_{\ell}$ such that data points in the same $\mathcal{X}_{a}$ are more ``similar'' than data points in different subsets. Clearly, the notion of similarity is context dependent. Although there exist algorithms that operate on the data directly, for example $k$-means, many modern algorithms proceed by first representing the data as a weighted graph $G = (V,E,A)$ with $V =\{1,\ldots, n\}$ and $A_{ij}$ representing the similarity between $\bfx_i$ and $\bfx_j$ and then using a graph clustering algorithm on $G$. Spectral clustering \cite{Ng2002} is an archetypal example of such an approach. Constructing $A$ requires a choice of metric $d(\cdot,\cdot): \mathbb{R}^{D}\times\mathbb{R}^{D}\to \mathbb{R}$. Ideally, one should choose $d$ such that points in the same cluster are close with respect to $d(\cdot,\cdot)$, while points in different clusters remain distant. Thus, the choice of metric should reflect, in some way, our assumptions about the data $\mathcal{X}$ and the notion of similarity we would like the clusters to reflect. \\ 

A common assumption, frequently referred to as the {\em manifold hypothesis} (see, for example, \cite{Fefferman2016}) posits that each $\mathcal{X}_a$ is sampled from a latent data manifold $\mathcal{M}_a$. Many types of data sets are known or suspected to satisfy this hypothesis, for example motion segmentation \cite{Costeira1998,Aldroubi2019}, images of faces or objects taken from different angles or under different lighting \cite{Basri2003,Ho2003} or handwritten digits \cite{Tenenbaum2000}. It is also usually assumed that the dimension of each $\mathcal{M}_a$ is much lower than the ambient dimension $D$. Although it can be shown that taking $d(\cdot,\cdot)$ to be the Euclidean distance can be successful \cite{Arias2011} for such data, {\em data-driven} metrics have been increasingly favored \cite{Coifman2006,Bijral2011,Chu2017,Little2017}. \\

\begin{figure}
    \centering
    \includegraphics[width=0.5\linewidth]{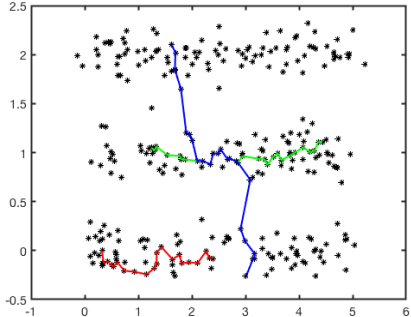}
    \caption{Three sample geodesics in the power weighted shortest path metric with $p=2$, for the data set ``Three Lines'' (see \S \ref{sec:ExperimentalResults}). Observe how the geodesics consist of many small hops, instead of several large hops. The total lengths of the red and green paths are significantly smaller than the length of the blue path.}
    \label{fig:Sample_Geodesics}
\end{figure}

Once $d(\cdot,\cdot)$ has been chosen, $A$ can be constructed. A common choice \cite{Ng2002,Zelnik2005} is to use some variant of a Gaussian kernel, whereby $A_{ij} = \exp(-d^2(\bfx_i,\bfx_j)/\sigma^{2})$ for a user defined parameter $\sigma$. However this is unsuitable for large data sets, as the resulting similarity matrix is dense and thus may be too large to store in memory. Moreover, operations which form core parts of many clustering algorithms, for example computing eigenvectors, are prohibitively expensive when $A$ is full. Hence in this case, a common choice that yields a sparse similarity matrix is to use a $k$ nearest neighbors ($k$-NN) graph constructed as:
$$
A_{ij} = \left\{\begin{array}{cc} 1 & \text{ if $\bfx_i$ among the $k$ nearest neighbors of $\bfx_j$ with respect to $d(\cdot,\cdot)$} \\ 0 & \text{ otherwise }\end{array}\right.
$$
Here $k$ is a user specified parameter. \\

In this article we consider taking $d(\cdot,\cdot)$ to be a  {\em power weighted shortest path metric} (henceforth: $p$-wspm and defined in \S \ref{section:PowerWeightedSPM}) with an emphasis on cases where the data satisfies the manifold hypothesis and where the data set is so large that a $k$-NN similarity matrix is preferable to a full similarity matrix. The use of shortest path metrics in clustering data is not new (see the discussion in \S \ref{sec:PriorWork}), but has typically been hindered by high computational cost. Indeed finding the pairwise distance between all $\bfx_{\alpha},\bfx_{\beta}\in\mathcal{X}$ in the shortest path metric is equivalent to the all pairs shortest paths problem on a complete weighted graph, which requires $O(n^3)$ operations using the Floyd-Warshall algorithm. We provide a way around this computational barrier, and also contribute to the theoretical analysis of $p$-wspm's. Specifically, our contributions are:
 
\begin{enumerate}
    \item We prove that $p$-wspm's behave as expected for data satisfying the manifold hypothesis. That is, we show that the maximum distance between points in the same cluster is small with high probability, and tends to zero as the number of data points tends to infinity. On the other hand, the maximum distance between points in different clusters remains bounded away from zero.
    
    \item We show how $p$-wspm's can be thought of as interpolants between the Euclidean metric and the {\em longest leg path distance} (defined in \S \ref{sec:LLPD}), which we shall abbreviate to LLPD.
    \item We introduce a novel modified version of Dijkstra's algorithm that computes the $k$ nearest neighbors, with respect to any $p$-wspm or the LLPD, of any $\bfx_{\alpha}$ in $\mathcal{X}$ in $O(k^{2}\mathcal{T}_{Enn})$ time, where $\mathcal{T}_{Enn}$ is the cost of a Euclidean nearest-neighbor query. Hence one can construct a $p$-wspm k-NN graph in $O(nk^{2}\mathcal{T}_{Enn})$. As we typically have $k \ll n$, {\em i.e.} $k = O(\log(n))$ or even $k = O(1)$, this means that constructing a $p$-wspm k-NN graph requires only marginally more time than constructing a Euclidean $k$-NN graph (which requires $O(nk\mathcal{T}_{Enn})$).
    
    \item We verify experimentally that using a $p$-wspm in lieu of the Euclidean metric results in an appreciable increase in clustering accuracy, at the cost of a small increase in run time, for a wide range of real and synthetic data sets. 
    
\end{enumerate}

After establishing notation and surveying the literature in \S 2, we prove our main results in \S 3 and \S 4. In \S 5 we present our algorithm for computing $k$ nearest neighbors in any $p$-wspm, while in \S 6 we report the results of our numerical experiments.

\section{Definitions and Notation}
Let us first fix some notation. Throughout this paper, $D$ shall denote the ambient dimension, while $\mathcal{X}$ will denote a fixed, finite sets of distinct points in $\mathbb{R}^{D}$. We shall denote the Euclidean ({\em i.e.} $\ell_2$) norm on $\mathbb{R}^{D}$ as $\|\cdot\|$. For any finite set $S$, by $|S|$ we shall mean its cardinality. For any positive integer $\ell$, by $[\ell]$ we mean the set $\{1,2,\ldots, \ell\}$. Finally, for two functions $f(n)$ and $g(n)$ by $f(n) = O(g(n))$ we shall mean that there exist constants $C$ and $n_0$ such that $f(n) \leq Cg(n)$ for all $n\geq n_0$. Similarly, by $f(n) = o(g(n))$ we shall mean that $f(n)/g(n)\to 0$ as $n\to \infty$. Occasionally we shall explicitly indicate the dependence on the variable $n$ (later, also $n_a$) by writing $O_{n}$ or $o_n$. 

\subsection{Data Model}
\label{sec:DataModel}
We consider data sets $\mathcal{X} = \mathcal{X}_1\cup\mathcal{X}_{2}\cup\ldots \cup\mathcal{X}_{\ell} \subset \mathbb{R}^{D}$ consisting naturally of $\ell$ clusters, which are {\em a priori} unknown. Let $|\mathcal{X}_a| = n_a$. We posit that for each $\mathcal{X}_a$ there is a smooth, compact, embedded manifold $\mathcal{M}_{a} \hookrightarrow \mathbb{R}^{D}$ such that $\mathcal{X}_a\subset\mathcal{M}_a$. Let $g_{a}$ denote the restriction of the Euclidean metric to $\mathcal{M}_a$, then $(\mathcal{M}_a,g_{a})$ is a compact Riemannian manifold. We shall further assume that  $\mathcal{X}_a$ is sampled according to a probability density function $\mu_a$ supported on $\mathcal{M}_{a}$ and continuous with respect to the metric $g_{a}$. For any $\bfx,\bfy\in\mathcal{M}_a$ let 
\begin{equation*}
\text{dist}_{a}(\bfx,\bfy) := \inf_{\gamma} \int_{0}^{1} \sqrt{g_{a}(\gamma^{'}(t),\gamma^{'}(t))}dt
\end{equation*}
denote the metric induced by $g_{a}$, where the infimum is over all piecewise smooth curves $\gamma : [0,1] \to \mathcal{M}_a$ with $\gamma(0) = \bfx$ and $\gamma(1) = \bfy$. Define the {\em diameter} of $\mathcal{M}_a$ to be the supremum over all distances between points in $\mathcal{M}_a$:
$$
\text{diam}(\mathcal{M}_a) := \sup_{\bfx,\bfy\in\mathcal{M}_a} \text{dist}_a(\bfx,\bfy)
$$
Since each $\mathcal{M}_a$ is compact this supremum is in fact a maximum and $\text{diam}(\mathcal{M}_a)$ is finite. We assume that the data manifolds are fairly well separated, that is, 
\begin{equation}
\text{dist}(\mathcal{M}_a,\mathcal{M}_b)  = \min_{\bfx\in\mathcal{M}_a, \bfy\in\mathcal{M}_b}\|\bfx - \bfy\| \geq \delta > 0 \text{ for all},\, 1\leq a< b\leq \ell
\label{eq:DefinitionOfDelta}
\end{equation}

Note that frequently (for example, in \cite{Arias2011}), this model is extended to allow for noisy sampling, whereby for some $\tau > 0$, $\mathcal{X}_a$ is sampled from the tube $B(\mathcal{M}_a,\tau)$ defined as:
$$
B(\mathcal{M}_a,\tau) = \left\{\bfx\in\mathbb{R}^{D} \ : \ \min_{\bfy\in\mathcal{M}_a}\|\bfx-\bfy\|_2 \leq \tau\right\}.
$$
but we leave this extension to future work. 
\subsection{Power Weighted Shortest Path Metrics}
\label{section:PowerWeightedSPM}

For any distinct pair $\bfx_{\alpha},\bfx_{\beta} \in \mathcal{X}$ and any path $\gamma = \bfx_{\alpha}\to \bfx_1\to\ldots\to \bfx_{m}\to\bfx_{\beta}$, define the $p$-weighted length of $\gamma$ to be:
\begin{equation}
L^{(p)}(\gamma) := \left(\sum_{j=0}^{m} \|\bfx_{i_{j+1}} - \bfx_{i_{j}}\|^{p}\right)^{1/p}
\label{eq:DefinitionOfPathLength}
\end{equation}
where by convention we declare $\bfx_{i_{0}} = \bfx_{\alpha}$ and $\bfx_{i_{m+1}} = \bfx_{\beta}$. We define the {\em $p$-weighted shortest path distance} from $\bfx_{\alpha}$ to $\bfx_{\beta}$ through $\mathcal{X}$ to be the minimum length over all such paths:
\begin{equation}
d^{(p)}_{\mathcal{X}}(\bfx_{\alpha},\bfx_{\beta}) := \min\left\{L^{(p)}(\gamma) \ : \ \gamma \text{ a path from $\bfx_{\alpha}$ to $\bfx_{\beta}$ through } \mathcal{X} \right\}
\label{eq:DefinitionOfdp}
\end{equation}
Note that $d^{(p)}_{\mathcal{X}}$ is a metric on the set $\mathcal{X}$ for $p\geq 1$ (see, for example, \cite{Howard2001}). As several authors \cite{Hwang2016,Chu2017,Alamgir2012} have noted, the metric $d^{(p)}_{\mathcal{X}}$ is {\em density-dependent}, so that if $\bfx_{\alpha}$ and $\bfx_{\beta}$ are contained in a region of high density ({\em i.e.} a cluster) the path distance $d^{(p)}(\bfx_{\alpha},\bfx_{\beta})$ will likely be shorter than the Euclidean distance $\|\bfx_{\alpha} - \bfx_{\beta}\|$ (as long as $p > 1$). 

\subsection{Longest-Leg Path Distance}
\label{sec:LLPD}
Another common path-based distance is the longest-leg path distance (LLPD), which we shall denote as $d^{(\infty)}_{\mathcal{X}}$ (the choice of this notation should become clear shortly). It is defined as the minimum, over all paths from $\bfx_{\alpha}$ to $\bfx_{\beta}$ through $\mathcal{X}$, of the maximum distance between consecutive points in the path ({\em i.e.} legs). Before formally defining $d^{(\infty)}_{\mathcal{X}}$, define, for any path $\gamma$ from $\bfx_{\alpha}$ to $\bfx_{\beta}$ through $\mathcal{X}$, {\em the longest-leg length of} $\gamma$ as:
$$
L^{(\infty)}(\gamma) = \max_{j=0,\ldots,m} \|\bfx_{i_{j+1}} - \bfx_{i_j}\|
$$
again we are using the convention that $\bfx_{i_{0}} = \bfx_{\alpha}$ and $\bfx_{i_{m+1}} = \bfx_{\beta}$. Now, in analogy with \eqref{eq:DefinitionOfdp}:
\begin{equation}
d^{(\infty)}_{\mathcal{X}}(\bfx_i,\bfx_j) = \min \left\{L^{(\infty)}(\gamma): \gamma \text{ a path from $\bfx_{\alpha}$ to $\bfx_{\beta}$ through $\mathcal{X}$} \right\} 
\end{equation}
$d^{\infty}_{\mathcal{X}}$ is also a metric, in fact an ultrametric \cite{Little2017}, on $\mathcal{X}$.

\subsection{Prior work}
\label{sec:PriorWork}
The idea of using $p$-wspm's for clustering was proposed in \cite{Vincent2003}, and further explored in \cite{Orlitsky2005}. Recently, several papers \cite{Fischer2003,Chang2008,Little2017} have considered the use of LLPD for clustering and in particular \cite{Little2017} provides performance guarantees for spectral clustering with LLPD for a data model that is similar to ours. \cite{Chu2017} studies $p$-wspm's for $p\geq 2$ and proposes to use such metrics with density-based clustering algorithms, such as DBScan, although they do not provide any experimental results. The paper \cite{Bijral2011} proposes the use of $p$-wspm's for semi-supervised learning and provides a fast Dijkstra-style algorithm for finding, for every $\bfx\in\mathcal{X}$, its nearest neighbor, with respect to a $p$-wspm, in some set of labeled data points $\mathcal{L}$. They consider a similar data model to ours, but do not provide any quantitative results on the behaviour of shortest path distances. More generally, shortest path distances are a core part of the ISOMAP dimension reduction algorithm \cite{Tenenbaum2000}, although we emphasize that here not all paths through $\mathcal{X}$ are considered---{\em first} a $k$-NN graph $G^{(k)}$ is computed from $\mathcal{X}$ and only paths in this graph are admissible. \\

The asympotic behaviour of power weighted shortest path distances are analyzed for Euclidean Poisson processes in \cite{Howard2001} and for points sampled from an arbitrary probability distribution supported on a Riemannian manifold $\mathcal{M}$ in \cite{Hwang2016}. Note that in \cite{Hwang2016} the lengths of the legs of the path are measured using geodesic distance on $\mathcal{M}$, which is not computable in the data model we are considering as the $\mathcal{M}_a$ are unknown. Finally, in \cite{Chu2017} the results of \cite{Hwang2016} are used to show that, for certain $k$ and $p \geq 2$, with high probability the Euclidean $k$-NN graph can be used to compute $p$-wspm distances in the case where the data is sampled from a single Riemannian manifold $\mathcal{M}$. We discuss this further in \S \ref{sec:Compare_with_Chu}. \\


On the computational side, we are unaware of any prior mention of Algorithm \ref{alg:Dijkstra2} in the literature, although similar algorithms, which solve slightly different problems, are presented in \cite{Har2016}, \cite{Moscovich2017} and \cite{Bijral2011}. In \cite{Bijral2011} a variation of Dijkstra's algorithm is presented which starts from a set of labeled data $\mathcal{L}$ and expands its search through the rest of the data $\mathcal{X}$. The algorithm is designed to find, for all $\bfx_i\in\mathcal{X}$, the nearest point in $\mathcal{L}$ to $\bfx_i$ with respect to a $p$-wspm and then terminate. Thus, it is not clear how one would extend this algorithm to finding $k$ nearest neighbors for $k >1$. The algorithm of \cite{Har2016} is formulated for any weighted graph $G = (V,E,A)$ ({\em i.e.} not just graphs obtained from data sets $\mathcal{X}\subset\mathbb{R}^{D}$) and as such is not well-adapted to the problem at hand. In particular, it has run time $O(k(n\log(n) + |E|))$. Because the distance graph obtained from $\mathcal{X}$ is implicitly complete, $|E| = O(n^{2})$ and this results in a run time proportional to $kn^{2}$, which is infeasible for large data sets. Finally, the algorithm presented in \cite{Moscovich2017}, although adapted to the situation of distance graphs of data sets, actually solves a slightly different problem. Specifically they consider finding the $k_1$ $p$-wspm nearest neighbors of each $\bfx\in\mathcal{X}$ {\em in a $k_2$ Euclidean nearest neighbors graph} of $\mathcal{X}$. As such, it is not clear whether the set of nearest neighbors produced by their algorithm are truly the $p$-wspm nearest neighbors in $\mathcal{X}$.\\

Let us also mention that our approach is ``one at a time'', whereas the other three algorithms mentioned are ``all at once''. That is, our algorithm takes as input $\bfx\in\mathcal{X}$ and outputs the $k$ $p$-wpsm nearest neighbors of $\bfx$. This can then be iterated to find the $p$-wspm nearest neighbors of all $\bfx\in\mathcal{X}$. In contrast, ``all at once'' algorithms directly return the sets of $k$ nearest neighbors for each $\bfx\in\mathcal{X}$. Thus it is possible our algorithm will have applications in other scenarios where the $p$-wspm nearest neighbors of only some small subset of points of $\mathcal{X}$ are required or in ``online'' scenarios where new data points are continuously received.

\section{Relation between $p$-wspm's for different values of $p$}
\label{sec:ElementaryResults}
Here we compare $p$-wspm's for different values of the power weighting, $p$, in the interval $[1,\infty]$.

\begin{theorem}
\label{thm:Compare_p_weighted_path_lengths}
For any fixed $\bfx_{\alpha},\bfx_{\beta}\in\mathcal{X}$, and any fixed path $\gamma$ from $\bfx_{\alpha}$ to $\bfx_{\beta}$:
\begin{enumerate}
    \item If $1\leq p < q < \infty$ then: 
    $$
    L^{(q)}(\gamma) \leq L^{(p)}(\gamma) \leq n^{(1/p-1/q)}L^{(q)}(\gamma)
    $$
    \item For any $1\leq p < \infty$:
    $$
      L^{(\infty)}(\gamma) \leq L^{(p)}(\gamma) \leq n^{1/p} L^{(\infty)}(\gamma) 
    $$
\end{enumerate}
where $n:=|\mathcal{X}|$.
\end{theorem}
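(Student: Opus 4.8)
The plan is to recognize that all four inequalities are instances of the standard comparison inequalities between $\ell_p$ norms, applied to the single vector of \emph{leg lengths}. Writing $a_j := \|\bfx_{i_{j+1}} - \bfx_{i_j}\|$ for $j = 0,\ldots,m$ and collecting these into $\mathbf{a} = (a_0,\ldots,a_m)$, a vector with $N := m+1$ entries, the definitions reduce to $L^{(p)}(\gamma) = \|\mathbf{a}\|_p$ and $L^{(\infty)}(\gamma) = \|\mathbf{a}\|_\infty$. Thus the theorem is exactly the classical statement that $\|\mathbf{a}\|_q \leq \|\mathbf{a}\|_p \leq N^{1/p - 1/q}\|\mathbf{a}\|_q$ together with its $q = \infty$ analogue, combined with the observation that $N \leq n$.

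First I would dispose of that combinatorial observation. Since $\gamma$ is a path through the $n$ distinct points of $\mathcal{X}$, its vertices $\bfx_\alpha, \bfx_1, \ldots, \bfx_m, \bfx_\beta$ are distinct elements of $\mathcal{X}$, so there are at most $n$ of them and hence $N = m+1 \leq n$. Because $1/p - 1/q > 0$ and $1/p > 0$, the map $t \mapsto N^t$ is increasing in $N$, so I can freely replace $N$ by $n$ in each upper bound at the very end.

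Next I would handle the two lower bounds. For $L^{(\infty)}(\gamma) \leq L^{(p)}(\gamma)$, note that $\|\mathbf{a}\|_p^p = \sum_j a_j^p \geq \max_j a_j^p = \|\mathbf{a}\|_\infty^p$ and take $p$-th roots. For $L^{(q)}(\gamma) \leq L^{(p)}(\gamma)$ with $p < q < \infty$, I would use the standard normalization trick: by homogeneity assume $\|\mathbf{a}\|_p = 1$, so each $a_j \leq 1$; then $a_j^q \leq a_j^p$ since $q > p$, whence $\sum_j a_j^q \leq \sum_j a_j^p = 1$ and $\|\mathbf{a}\|_q \leq 1 = \|\mathbf{a}\|_p$.

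Finally, the upper bounds. The case against $L^{(\infty)}$ is immediate: $\sum_j a_j^p \leq N(\max_j a_j)^p = N\|\mathbf{a}\|_\infty^p$, so $\|\mathbf{a}\|_p \leq N^{1/p}\|\mathbf{a}\|_\infty \leq n^{1/p}L^{(\infty)}(\gamma)$. The one genuinely computational step is $L^{(p)}(\gamma) \leq N^{1/p-1/q}L^{(q)}(\gamma)$, which I would obtain from H\"older's inequality applied to $\sum_j a_j^p \cdot 1$ with conjugate exponents $r = q/p$ and $r' = q/(q-p)$; this yields $\sum_j a_j^p \leq (\sum_j a_j^q)^{p/q} N^{(q-p)/q}$, and taking $1/p$-th roots produces exactly the factor $N^{1/p-1/q}$, bounded above by $n^{1/p-1/q}$. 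The only thing to watch is selecting the conjugate exponents so that the exponents of $N$ combine to precisely $1/p - 1/q$; beyond this bookkeeping the argument is routine, and the real content is simply the translation of $p$-weighted path lengths into $\ell_p$ norms of the leg-length vector.
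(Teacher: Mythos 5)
Your proof is correct and follows essentially the same route as the paper's: both identify $L^{(p)}(\gamma)$ as the $\ell_p$ norm of the leg-length vector, invoke the standard $\ell_p$--$\ell_q$ comparison inequalities, and bound the number of legs by $n$. The only difference is that you prove those classical norm inequalities (via normalization and H\"older) where the paper simply cites them as well known.
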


\begin{proof}
We shall make use of the following well-known results from analysis: 
\begin{enumerate}
\item for any fixed $m$, any $v\in\mathbb{R}^{m}$ and any $0<p<q$ we have that 
\begin{equation}
\|v\|_{q} \leq \|v\|_p \leq m^{(1/p-1/q)}\|v\|_q
\label{eq:p_q_bound}
\end{equation}
\item for any fixed $m$, any $v\in\mathbb{R}^{m}$ and any $0<p<\infty$:
\begin{equation}
\|v\|_{\infty} \leq \|v\|_p \leq m^{(1/p)}\|v\|_{\infty}
\label{eq:p_inf_bound}
\end{equation}
\end{enumerate}

For any path $\gamma = \bfx_{\alpha}\to \bfx_1\to\ldots\to \bfx_{m}\to\bfx_{\beta}$, observe that $L^{(p)}(\gamma)$ can be thought of as $\|v_{\gamma}\|_p$, where $v_{\gamma}\in\mathbb{R}^{m+1}$ is the vector:
$$
v_{\gamma} = \left[\|\bfx_1 - \bfx_0\|,\|\bfx_2 - \bfx_1\|,\ldots, \|\bfx_{m+1} - \bfx_m\|\right]^{\top}
$$
It follows from \eqref{eq:p_q_bound} that $L^{(q)}(\gamma) \leq L^{(p)}(\gamma) \leq (m+1)^{(1/p-1/q)}L^{(q)}(\gamma)$. Observe that $m+1$, the path length, is always less than or equal to $n$, the number of points in $\mathcal{X}$. Thus $(m+1)^{(1/p-1/q)} \leq n^{(1/p-1/q)}$ and part 1. of the theorem follows. In a similar fashion, we obtain part 2. from \eqref{eq:p_inf_bound}. 
\end{proof}

\begin{corollary}
\label{cor:Monotonicity_of_path_dist}
For any fixed $\mathcal{X}$, any $1\leq p < q<\infty $ and all $\bfx_{\alpha},\bfx_{\beta}\in\mathcal{X}$:
$$
 d_{\mathcal{X}}^{(q)}(\bfx_{\alpha},\bfx_{\beta}) \leq d_{\mathcal{X}}^{(p)}(\bfx_{\alpha},\bfx_{\beta}) \leq n^{(1/p-1/q)}d_{\mathcal{X}}^{(q)}(\bfx_{\alpha},\bfx_{\beta})
$$
where $n:=|\mathcal{X}|$.
\end{corollary}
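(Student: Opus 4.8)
The plan is to derive each inequality directly from the corresponding path-wise inequality in Theorem \ref{thm:Compare_p_weighted_path_lengths}, part 1, by applying it to a cleverly chosen path rather than to an arbitrary one. The only subtlety is that the path minimizing $L^{(p)}$ need not be the same as the one minimizing $L^{(q)}$; the trick is to exploit the fact that each $d_{\mathcal{X}}^{(\cdot)}$ is a \emph{minimum} over paths, so it is bounded above by the length of any particular path we care to plug in.

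For the left-hand inequality $d_{\mathcal{X}}^{(q)} \leq d_{\mathcal{X}}^{(p)}$, I would first let $\gamma_p$ denote a path from $\bfx_{\alpha}$ to $\bfx_{\beta}$ through $\mathcal{X}$ that achieves the minimum defining $d_{\mathcal{X}}^{(p)}(\bfx_{\alpha},\bfx_{\beta})$; such a path exists since $\mathcal{X}$ is finite and there are only finitely many simple paths. Applying the first inequality of Theorem \ref{thm:Compare_p_weighted_path_lengths} to $\gamma_p$ gives $L^{(q)}(\gamma_p) \leq L^{(p)}(\gamma_p) = d_{\mathcal{X}}^{(p)}(\bfx_{\alpha},\bfx_{\beta})$. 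Since $d_{\mathcal{X}}^{(q)}(\bfx_{\alpha},\bfx_{\beta})$ is by definition the minimum of $L^{(q)}$ over all admissible paths, it is no larger than $L^{(q)}(\gamma_p)$, and chaining these two facts yields the claim.

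For the right-hand inequality, I would run the symmetric argument but starting from the optimal path for the other exponent. Let $\gamma_q$ achieve the minimum defining $d_{\mathcal{X}}^{(q)}(\bfx_{\alpha},\bfx_{\beta})$. The upper bound in Theorem \ref{thm:Compare_p_weighted_path_lengths}, part 1, applied to $\gamma_q$ gives $L^{(p)}(\gamma_q) \leq n^{(1/p-1/q)} L^{(q)}(\gamma_q) = n^{(1/p-1/q)} d_{\mathcal{X}}^{(q)}(\bfx_{\alpha},\bfx_{\beta})$. Because $d_{\mathcal{X}}^{(p)}(\bfx_{\alpha},\bfx_{\beta}) \leq L^{(p)}(\gamma_q)$ (again, the distance is a minimum over paths), the two estimates combine to give $d_{\mathcal{X}}^{(p)}(\bfx_{\alpha},\bfx_{\beta}) \leq n^{(1/p-1/q)} d_{\mathcal{X}}^{(q)}(\bfx_{\alpha},\bfx_{\beta})$.

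There is essentially no hard step here: the corollary is a routine transfer of a path-level inequality to the infimum level, and the one thing to be careful about is not to assume a common minimizing path for both exponents. Using the $p$-optimal path for the lower bound and the $q$-optimal path for the upper bound sidesteps that entirely, so no additional estimates are needed beyond Theorem \ref{thm:Compare_p_weighted_path_lengths}.
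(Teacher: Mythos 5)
Your proposal is correct and follows essentially the same route as the paper: both transfer the path-wise inequality of Theorem \ref{thm:Compare_p_weighted_path_lengths} to the minima defining the distances, with your write-up merely making explicit (via the $p$-optimal and $q$-optimal paths) the step the paper compresses into ``the same relationship must hold for the minimum over all paths.'' No gap; your care about not assuming a common minimizing path is exactly the right justification for that compressed step.
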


\begin{proof}
By Theorem \ref{thm:Compare_p_weighted_path_lengths}:  $L^{(q)}(\gamma) \leq L^{(p)}(\gamma) \leq n^{(1/p-1/q)}L^{(q)}(\gamma)$ for all paths $\gamma$ between $\bfx_{\alpha}$ and $\bfx_{\beta}$, hence the same relationship must hold for the minimum over all paths from $\bfx_{\alpha}$ to $\bfx_{\beta}$:
$$
\min_{\gamma}L^{(q)}(\gamma) \leq \min_{\gamma} L^{(p)}(\gamma) \leq n^{(1/p-1/q)}\min_{\gamma} L^{(q)}(\gamma)
$$
which, appealing to the definition of $d_{\mathcal{X}}^{(p)}(\bfx_{\alpha},\bfx_{\beta})$ (see \eqref{eq:DefinitionOfdp}) yields the corollary.
\end{proof}

\begin{corollary}
\label{cor:Limit_of_path_dist}
For any fixed $\mathcal{X}$, we have that $\displaystyle \lim_{p\to\infty} d^{(p)}(\bfx_{\alpha},\bfx_{\beta}) = d^{(\infty)}(\bfx_{\alpha},\bfx_{\beta})$ for all $\bfx_{\alpha},\bfx_{\beta}\in\mathcal{X}$.
\end{corollary}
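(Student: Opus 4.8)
The plan is to upgrade part 2 of Theorem \ref{thm:Compare_p_weighted_path_lengths} from a statement about individual path lengths to a statement about the minimizing distances, exactly as Corollary \ref{cor:Monotonicity_of_path_dist} does using part 1, and then to let $p\to\infty$.

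First I would apply part 2 of Theorem \ref{thm:Compare_p_weighted_path_lengths}, which asserts that $L^{(\infty)}(\gamma) \leq L^{(p)}(\gamma) \leq n^{1/p}L^{(\infty)}(\gamma)$ for every path $\gamma$ from $\bfx_{\alpha}$ to $\bfx_{\beta}$ through $\mathcal{X}$. Since $\mathcal{X}$ is finite, the set of (simple) admissible paths is finite, so each of the minima defining $d^{(p)}$ and $d^{(\infty)}$ is attained. Because the chain of inequalities holds for \emph{every} admissible $\gamma$, it is preserved under taking the minimum over $\gamma$: if $\gamma^{*}$ minimizes $L^{(p)}$ then $d^{(\infty)} \leq L^{(\infty)}(\gamma^{*}) \leq L^{(p)}(\gamma^{*}) = d^{(p)}$, and if $\gamma^{**}$ minimizes $L^{(\infty)}$ then $d^{(p)} \leq L^{(p)}(\gamma^{**}) \leq n^{1/p}L^{(\infty)}(\gamma^{**}) = n^{1/p}d^{(\infty)}$. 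This yields the two-sided bound
$$ d^{(\infty)}(\bfx_{\alpha},\bfx_{\beta}) \leq d^{(p)}(\bfx_{\alpha},\bfx_{\beta}) \leq n^{1/p}\, d^{(\infty)}(\bfx_{\alpha},\bfx_{\beta}). $$

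Finally I would let $p\to\infty$. The crucial observation is that $n = |\mathcal{X}|$ is a fixed constant that does not depend on $p$ (this is precisely where finiteness of $\mathcal{X}$ is used), so $n^{1/p}\to 1$. The squeeze theorem applied to the displayed inequality then forces $d^{(p)}(\bfx_{\alpha},\bfx_{\beta})\to d^{(\infty)}(\bfx_{\alpha},\bfx_{\beta})$, which is the claim.

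I do not anticipate any genuine obstacle: the argument is a direct sandwich. The only points requiring a moment's care are the justification that the inequalities survive the passage to the minimum over paths (the same elementary step already carried out in Corollary \ref{cor:Monotonicity_of_path_dist}) and the emphasis that $n$ is independent of $p$, so that the multiplicative factor $n^{1/p}$ genuinely tends to $1$ rather than being allowed to grow with the limit variable.
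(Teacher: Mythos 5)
Your proof is correct and follows essentially the same route as the paper: apply part 2 of Theorem \ref{thm:Compare_p_weighted_path_lengths}, pass the two-sided inequality through the minimum over paths exactly as in Corollary \ref{cor:Monotonicity_of_path_dist}, and conclude by letting $p\to\infty$ using $n^{1/p}\to 1$. Your explicit squeeze-theorem phrasing is, if anything, slightly cleaner than the paper's, since it avoids writing $\lim_{p\to\infty} d^{(p)}_{\mathcal{X}}$ before its existence has been established.
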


\begin{proof}
Again by Theorem \ref{thm:Compare_p_weighted_path_lengths} we have that $L^{(\infty)}(\gamma) \leq L^{(p)}(\gamma) \leq n^{(1/p)}L^{(\infty)}(\gamma)$ for all paths $\gamma$ between $\bfx_{\alpha}$ and $\bfx_{\beta}$, hence by the same reasoning as in the proof of Corollary \ref{cor:Monotonicity_of_path_dist}:
$$
 d_{\mathcal{X}}^{(\infty)}(\bfx_{\alpha},\bfx_{\beta}) \leq d_{\mathcal{X}}^{(p)}(\bfx_{\alpha},\bfx_{\beta}) \leq n^{(1/p)}d_{\mathcal{X}}^{(\infty)}(\bfx_{\alpha},\bfx_{\beta}).
$$
Because this holds for all $p<\infty$, taking the limit we get:
$$
 d_{\mathcal{X}}^{(\infty)}(\bfx_{\alpha},\bfx_{\beta}) \leq \lim_{p\to\infty} d_{\mathcal{X}}^{(p)}(\bfx_{\alpha},\bfx_{\beta}) \leq \lim_{p\to\infty} n^{(1/p)}d_{\mathcal{X}}^{(\infty)}(\bfx_{\alpha},\bfx_{\beta})
$$
and the result follows from the fact that $\lim_{p\to\infty} n^{(1/p)} = 1$.
\end{proof}



\begin{theorem}
\label{lemma:d_(1)}
For all $\bfx_{\alpha},\bfx_{\beta}\in\mathcal{X}$, $d^{(1)}_{\mathcal{X}}(\bfx_{\alpha},\bfx_{\beta}) = \|\bfx_{\alpha} - \bfx_{\beta}\|$
\end{theorem}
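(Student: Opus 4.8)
The plan is to prove equality by establishing the two inequalities separately, since $d^{(1)}_{\mathcal{X}}(\bfx_{\alpha},\bfx_{\beta})$ is defined as a minimum over paths while $\|\bfx_{\alpha}-\bfx_{\beta}\|$ is a fixed quantity.

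First I would establish the upper bound $d^{(1)}_{\mathcal{X}}(\bfx_{\alpha},\bfx_{\beta}) \leq \|\bfx_{\alpha}-\bfx_{\beta}\|$. This is immediate from the definition: the \emph{direct} path from $\bfx_{\alpha}$ to $\bfx_{\beta}$ with no intermediate points (i.e. the case $m=0$ in \eqref{eq:DefinitionOfPathLength}) is an admissible path through $\mathcal{X}$, and its $1$-weighted length is exactly $L^{(1)}(\gamma) = \|\bfx_{\alpha}-\bfx_{\beta}\|$. Since $d^{(1)}_{\mathcal{X}}$ is the minimum over all admissible paths, it cannot exceed the length of this particular one.

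Next I would establish the reverse inequality $d^{(1)}_{\mathcal{X}}(\bfx_{\alpha},\bfx_{\beta}) \geq \|\bfx_{\alpha}-\bfx_{\beta}\|$. The key tool is the triangle inequality for the Euclidean norm, applied iteratively. For any admissible path $\gamma = \bfx_{\alpha}\to\bfx_{i_1}\to\ldots\to\bfx_{i_m}\to\bfx_{\beta}$, repeated use of the triangle inequality gives
\begin{equation*}
\|\bfx_{\alpha}-\bfx_{\beta}\| \leq \sum_{j=0}^{m}\|\bfx_{i_{j+1}}-\bfx_{i_j}\| = L^{(1)}(\gamma),
\end{equation*}
using the convention $\bfx_{i_0}=\bfx_{\alpha}$, $\bfx_{i_{m+1}}=\bfx_{\beta}$. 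Since this lower bound $\|\bfx_{\alpha}-\bfx_{\beta}\|$ holds uniformly over every admissible path $\gamma$, it is a lower bound for the minimum of $L^{(1)}(\gamma)$, which is precisely $d^{(1)}_{\mathcal{X}}(\bfx_{\alpha},\bfx_{\beta})$. Combining the two inequalities yields the claimed equality.

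There is essentially no serious obstacle here; the statement is elementary. The only point requiring a little care is confirming that the direct single-leg path is genuinely admissible under the definition \eqref{eq:DefinitionOfdp}, so that the upper bound is attained rather than merely approached. This amounts to checking that $m=0$ is permitted in the path indexing, which it is. One could alternatively note that this result is a degenerate special case of the observation that $L^{(1)}$ is simply path length in the Euclidean graph, for which straight-line segments are geodesics.
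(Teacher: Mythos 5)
Your proposal is correct and follows essentially the same argument as the paper's proof: the one-hop path $\bfx_{\alpha}\to\bfx_{\beta}$ gives the upper bound, and iterated application of the triangle inequality shows every other path through $\mathcal{X}$ is at least as long, giving the lower bound. The only cosmetic difference is that you phrase it as two separate inequalities, while the paper phrases it as the claim that the one-hop path is the minimizer.
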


\begin{proof}
$d^{(1)}$ is defined as a minimum over all paths from $\bfx_{\alpha}$ to $\bfx_{\beta}$ through $\mathcal{X}$, and in particular the one hop path $\gamma_{\alpha\to\beta} = \bfx_{\alpha}\to\bfx_{\beta}$ is such a path. We claim it is the shortest such path as for any other path $\gamma = \bfx_{\alpha} \to \bfx_{i_1}\to \ldots \to \bfx_{i_m} \to \bfx_{\beta}$ by repeated applications of the triangle inequality:
\begin{align*}
L^{(1)}(\gamma_{\alpha\to\beta}) = \|\bfx_{\alpha} - \bfx_{\beta}\| & = \| \bfx_{\alpha} - \sum_{j=1}^{m}(\bfx_{i_j} - \bfx_{i_{j}}) - \bfx_{\beta}\| \leq \sum_{j=0}^{m} \|\bfx_{i_j} - \bfx_{i_{j+1}}\| = L^{(1)}(\gamma)
\end{align*}
\end{proof}

\section{$p$-wspm's in the Multi-Manifold Setting}
\label{sec:Theory}
One of the most useful aspects of $p$-wspm's, when applied to clustering problems, is that they tend to ``squeeze'' points in the same cluster together, while (hopefully) keeping points in different clusters separated. Here we make this more precise. Specifically we show that for any $1 <p < \infty$ if the data comes from the model described in \S \ref{sec:DataModel} then:

\begin{itemize}

\item $\displaystyle \min_{\bfx_{\alpha}\in\mathcal{X}_a,\bfx_{\beta}\in\mathcal{X}_b} d^{(p)}_{\mathcal{X}}(\bfx_{\alpha},\bfx_{\beta}) \geq \delta > 0$ (see Lemma \ref{lemma:Bound_epsilon2}). Recall that $\delta$ is the minimal separation between data manifolds.

\item $\displaystyle \max_{a\in [k]} \max_{\bfx_{\alpha},\bfx_{\beta} \in \mathcal{X}_a} d^{(p)}_{\mathcal{X}}(\bfx_{\alpha},\bfx_{\beta}) =  O(n_{\min}^{(1-p)/pd_{\max}})$ with probability tending to $1$ as $n$ tends to $\infty$. (see Theorem \ref{theorem:ShortestPathsBound_eps1}).

\end{itemize}
 
In this section it is sometimes necessary to enlarge our definition of $p$-wspm to allow for paths between $\bfx,\bfy\in\mathbb{R}^{D}$ that are not necessarily in $\mathcal{X}$ (and points that are not in $\mathcal{X}$ shall be denoted without a subscript). Thus $d^{(p)}_{\mathcal{X}}(\bfx,\bfy)$ is technically defined as, using the notation of \S \ref{section:PowerWeightedSPM}, $d^{(p)}_{\mathcal{X}\cup\{\bfx,\bfy\}}(\bfx,\bfy)$.

\subsection{Paths between points in different clusters} 

Here we prove that $p$-wspm's maintain a separation between points in different clusters.

\begin{theorem}
Let $\epsilon_{2}$ denote the minimal distance between points in different clusters. That is:
$$
\epsilon_{2} := \min_{\substack{ a,b\in [\ell] \\ a\neq b}}\min_{\substack{ \bfx_{\alpha}\in\mathcal{X}_a \\ \bfx_{\beta}\in\mathcal{X}_b}} d^{(p)}_{\mathcal{X}}(\bfx_{\alpha},\bfx_{\beta})
$$
Then $\epsilon_{2} \geq \delta$ with $\delta$ as defined in \eqref{eq:DefinitionOfDelta}.
\label{lemma:Bound_epsilon2}
\end{theorem}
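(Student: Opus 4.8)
The plan is to establish $d^{(p)}_{\mathcal{X}}(\bfx_{\alpha},\bfx_{\beta}) \geq \delta$ for every pair $\bfx_{\alpha}\in\mathcal{X}_a$, $\bfx_{\beta}\in\mathcal{X}_b$ with $a\neq b$; taking the minimum over all such pairs then yields $\epsilon_2 \geq \delta$ directly. Since $d^{(p)}_{\mathcal{X}}(\bfx_{\alpha},\bfx_{\beta})$ is itself a minimum over admissible paths, it suffices to bound $L^{(p)}(\gamma) \geq \delta$ from below for every path $\gamma = \bfx_{\alpha}\to \bfx_{i_1}\to\ldots\to \bfx_{i_m}\to\bfx_{\beta}$ through $\mathcal{X}$.

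The crux is a simple combinatorial observation. Because $\text{dist}(\mathcal{M}_c,\mathcal{M}_{c'}) \geq \delta > 0$ for $c\neq c'$, the manifolds are pairwise disjoint, so every vertex of $\gamma$ lies on exactly one manifold and hence carries a well-defined cluster label. The first vertex $\bfx_{\alpha}$ has label $a$ while the last vertex $\bfx_{\beta}$ has label $b\neq a$. Reading the labels along $\gamma$, I would argue (a discrete intermediate-value / pigeonhole step) that they cannot all coincide, so there exists at least one consecutive pair $\bfx_{i_j},\bfx_{i_{j+1}}$ whose labels differ---a \emph{crossing leg}. For that leg the two endpoints lie on distinct manifolds, say $\mathcal{M}_{c_j}$ and $\mathcal{M}_{c_{j+1}}$, so by the separation assumption \eqref{eq:DefinitionOfDelta},
$$
\|\bfx_{i_{j+1}} - \bfx_{i_j}\| \geq \text{dist}(\mathcal{M}_{c_j},\mathcal{M}_{c_{j+1}}) \geq \delta.
$$

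From here the bound is immediate. Since $L^{(\infty)}(\gamma)$ is the length of the longest leg of $\gamma$, it is at least the length of the crossing leg, whence $L^{(\infty)}(\gamma)\geq\delta$; and part 2 of Theorem \ref{thm:Compare_p_weighted_path_lengths} gives $L^{(p)}(\gamma) \geq L^{(\infty)}(\gamma)$, so $L^{(p)}(\gamma)\geq\delta$. (Equivalently, one can simply note that a single nonnegative summand satisfies $\|\bfx_{i_{j+1}} - \bfx_{i_j}\|^{p} \geq \delta^{p}$ inside \eqref{eq:DefinitionOfPathLength}.) As $\gamma$ was arbitrary, minimizing over paths yields $d^{(p)}_{\mathcal{X}}(\bfx_{\alpha},\bfx_{\beta})\geq\delta$, and minimizing over endpoints and cluster pairs gives the claim.

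I do not anticipate a serious obstacle: the argument is essentially the pigeonhole existence of a crossing leg together with the monotonicity of the $p$-norms already recorded in Theorem \ref{thm:Compare_p_weighted_path_lengths}. The only points demanding care are verifying that disjointness of the manifolds makes the cluster label of each vertex well defined (so that ``crossing leg'' is meaningful), and confirming that the enlarged definition of $d^{(p)}_{\mathcal{X}}$ does not interfere---here both endpoints already lie in $\mathcal{X}$, so every intermediate vertex is a genuine data point sitting on some $\mathcal{M}_c$, and the separation bound \eqref{eq:DefinitionOfDelta} applies verbatim.
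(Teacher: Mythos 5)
Your proof is correct and follows essentially the same route as the paper's: locate a leg of the path whose endpoints lie in different clusters, bound that single leg below by $\delta$ using \eqref{eq:DefinitionOfDelta}, and conclude $L^{(p)}(\gamma)\geq\delta$ before minimizing over paths and over endpoint pairs. If anything, your version is slightly more careful---the paper asserts the crossing leg runs from $\mathcal{X}_a$ to $\mathcal{X}_b$ specifically, which can fail when the path visits a third cluster, whereas you only require the crossing leg's endpoints to lie in two distinct clusters, which is all the uniform separation $\delta$ needs.
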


\begin{proof}
For any $\bfx_{\alpha}\in\mathcal{X}_a$ and $\bfx_{\beta}\in\mathcal{X}_{b}$ let $\gamma = \bfx_{\alpha}\to \bfx_{i_1}\to,\ldots, \to\bfx_{i_m}\to \bfx_{\beta}$ be any path from $\bfx_{\alpha}$ to $\bfx_{\beta}$ through $\mathcal{X}$, where again we are using the convention that $\bfx_{i_0}:= \bfx_{\alpha}$ and $\bfx_{i_{m+1}} = \bfx_{\beta}$. If $\bfx_{\alpha} \in\mathcal{X}_a$ and $\bfx_{\beta}\in\mathcal{X}_{b}$ there must exist (at least one) $j^{*}\in [m]$ such that $\bfx_{i_{j^{*}}}\in\mathcal{X}_a$ while $\bfx_{i_{j^{*}+1}}\in\mathcal{X}_b$. By the assumptions on the generative model, $\mathcal{X}_a\subset \mathcal{M}_a$ and $\mathcal{X}_b\subset \mathcal{M}_b$ and so: $\|\bfx_{i_{j^{*}+1}} - \bfx_{i_{j^{*}}}\|^{p} \geq \left(\text{dist}(\mathcal{M}_{a},\mathcal{M}_b)\right)^{p}  = \delta^{p}$ thus:
$$
L^{(p)}(\gamma) := \left(\sum_{j=0}^{m} \|\bfx_{i_{j+1}} - \bfx_{i_{j}}\|^{p}\right)^{1/p} \geq\left( \|\bfx_{i_{j^{*}+1}} - \bfx_{i_{j^{*}}}\|^{p}\right)^{1/p} \geq \delta.
$$
Because this holds for all such $\gamma$ we have $ d^{(p)}_{\mathcal{X}}(\bfx_{\alpha},\bfx_{\beta}) := \min_{\gamma}\left\{L^{(p)}(\gamma) \right\} \geq \delta$
and because this holds for all such $\bfx_{\alpha}$ and $\bfx_{\beta}$:
$$
\min_{\bfx_{\alpha}\in\mathcal{X}_a,\bfx_{\beta}\in\mathcal{X}_b} d^{(p)}_{\mathcal{X}}(\bfx_{\alpha},\bfx_{\beta}) \geq \delta
$$
Finally, this holds for all $a\neq b$, yielding the lemma.
\end{proof}

\subsection{Asymptotic Limits of power weighted shortest paths}
\label{sec:Asymptotics}
For all $a\in [\ell]$, define $d^{(p)}_{\mathcal{X}_a}(\bfx_{\alpha},\bfx_{\beta})$ as the minimum $p$-weighted length of paths from $\bfx_{\alpha}$ to $\bfx_{\beta}$ {\em through $\mathcal{X}_a$} ({\em i.e.} we are excluding paths that may pass through points in $\mathcal{X}\setminus \mathcal{X}_a$). Because $\mathcal{X}_a\subset\mathcal{X}$, it follows that $d^{(p)}_{\mathcal{X}}(\bfx_{\alpha},\bfx_{\beta}) \leq d^{(p)}_{\mathcal{X}_a}(\bfx_{\alpha},\bfx_{\beta})$\footnote{More generally the reader is invited to check that for any $\mathcal{Y}\subset\mathcal{X}$ we have that $d^{(p)}_{\mathcal{X}}(\bfx_{\alpha},\bfx_{\beta}) \leq d^{(p)}_{\mathcal{Y}}(\bfx_{\alpha},\bfx_{\beta})$.}. In this section we address the asymptotic behaviour of $d^{(p)}_{\mathcal{X}_a}(\bfx_{\alpha},\bfx_{\beta})$. Here is where we make critical use of the main theorem of \cite{Hwang2016}, which we state as Theorem \ref{theorem:FromHDH}. Recall that $\mu_{a}$ is the probability density function with respect to which $\mathcal{X}_a$ is sampled from $\mathcal{M}_a$, and that by assumption $\mu^{\min}_{a} := \displaystyle \min_{x\in\mathcal{M}_a}\mu_{a}(x) > 0$. Define the following power-weighted geodesic distance on $\mathcal{M}_a$:
\begin{equation}
\text{dist}^{(p)}_{a}(\bfx,\bfy) = \inf_{\eta} \int^{1}_{0}\frac{\sqrt{g_{a}(\eta_{t}^{'},\eta_{t}^{'})}}{\mu_{a}(\eta_t)^{(1-p)/d_{a}}}dt 
\label{eq:DefineWarpedDistance}
\end{equation}
where here the infimum is over all piecewise smooth paths $\eta: [0,1] \to \mathcal{M}_a$ with $\eta(0) = \bfx$ and $\eta(1) = \bfy$. As in \S \ref{sec:DataModel}, for the Riemannian manifold $(\mathcal{M}_{a},g_{a})$ let $\text{dist}_{a}(\bfx,\bfy)$ denotes the geodesic distance from $\bfx$ to $\bfy$ on $\mathcal{M}_{a}$ with respect to $g_a$. \\

In order to bound $d^{(p)}_{\mathcal{X}_a}(\bfx_{\alpha},\bfx_{\beta})$ we study an auxiliary shortest path distance, $d^{(p)}_{\mathcal{M}_a}$. This distance will also be defined as a minimum over $p$-weighted path lengths, but instead of measuring the length of the legs using the Euclidean distance $\|\cdot\|$, we measure them with respect to the geodesic distance $\text{dist}_{a}$:
\begin{equation}
d^{(p)}_{\mathcal{M}_a}(\bfx,\bfy) := \min_{\gamma}\left(\sum_{j=0}^{m}\text{dist}_{a}(\bfx_{i_{j+1}},\bfx_{i_j})^{p}\right)^{1/p} 
\label{eq:IntrinsicPathDist}
\end{equation}
where again the $\min$ is over all paths $\gamma$ from $\bfx$ to $\bfy$ through $\mathcal{X}_a$.

\begin{theorem}
\label{theorem:FromHDH}
Let $\mathcal{M}_{a}$ be a compact Riemannian manifold, and assume that $\mathcal{X}_{a}$ is drawn from $\mathcal{M}_a$ with continuous probability distribution $\mu_{a}$ satisfying $\min_{x\in\mathcal{M}_a}\mu_{a}(x) > 0$. Let $n_a:= |\mathcal{X}_a|$. For all $n_a$, let $r_{a} := n_a^{(1-p)/pd_a}$. Then for any $1\leq p < \infty$ and any fixed $\epsilon > 0$ there exists a constant $\theta^{'}_0$ independent of $n_a$ such that:
\begin{equation}
\mathbb{P}\left[\sup_{\substack{\bfx,\bfy \in \mathcal{M}_{a} \\ \text{dist}_{a}(\bfx,\bfy) \geq r_{a}}} \left|\frac{\left(d^{(p)}_{\mathcal{M}_a}(\bfx,\bfy)\right)^{p}}{n_{a}^{(1-p)/d_{a}}\text{dist}_{a}^{(p)}(\bfx,\bfy)} - C(d_{a},p)\right| > \epsilon \right]  = e^{-\theta_{0}^{'} n_{a}^{1/p(d_{a} + 2p)} + \mathcal{O}(\log(n_a))}
\label{eq:FromHDH}
\end{equation}
where $C(d_a,p)$ is a constant depending only on $d_{a}$ and $p$, but not on $n_a$.
\end{theorem}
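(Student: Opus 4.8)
The plan is to read this as a concentration statement for \emph{power-weighted first-passage percolation} transported from Euclidean space to the Riemannian manifold $(\mathcal{M}_a,g_a)$. The guiding principle is locality: inside a small geodesic ball around a point $\bfx$, the sample $\mathcal{X}_a$ behaves like a Poisson (or binomial) point process of nearly constant intensity $n_a\,\mu_a(\bfx)$ on an almost flat patch of $\mathbb{R}^{d_a}$, and geodesic leg-lengths $\text{dist}_a$ agree with Euclidean ones up to curvature corrections of lower order. So the first step is to pin down the baseline Euclidean limit: for a homogeneous unit-intensity Poisson process on $\mathbb{R}^{d_a}$, the functional $\left(d^{(p)}\right)^p$ between two points at Euclidean distance $L$ satisfies $\left(d^{(p)}\right)^p \approx C(d_a,p)\,L$ as $L\to\infty$. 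I would obtain this shape theorem from subadditive ergodic theory: the $p$-th power of the shortest-path functional is exactly subadditive under concatenation, since one may splice near-optimal paths on consecutive segments and the $\ell^p$ leg-lengths then add as $p$-th powers. The resulting time constant is precisely the density-free $C(d_a,p)$ of the statement, and the $p=1$ case correctly returns $C(d_a,1)=1$ in accordance with Theorem \ref{lemma:d_(1)}.

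Second, I would fix the scaling and fold in the density. Dilating the unit-intensity process by $\rho^{-1/d_a}$ produces intensity $\rho$ and multiplies each leg by $\rho^{-1/d_a}$, so $\left(d^{(p)}\right)^p \approx C(d_a,p)\,\rho^{(1-p)/d_a}\,L$; substituting the local intensity $\rho = n_a\,\mu_a(\bfx)$ and summing the $p$-th powers of the legs along the curve realizing the distance converts this pointwise scaling into the global normalization $n_a^{(1-p)/d_a}$ multiplied by a geodesic distance in a metric conformally rescaled by a power of $\mu_a$ --- precisely the warped distance $\text{dist}^{(p)}_a$ of \eqref{eq:DefineWarpedDistance}. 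Globalizing from a single ball to all of $\mathcal{M}_a$ is then a patching argument: subdivide the minimizing curve into many short pieces, apply the local limit on each, and add the contributions, controlling the junction effects between consecutive pieces as a lower-order error.

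Third --- and this is where the quantitative tail is born --- I would prove exponential concentration of $\left(d^{(p)}_{\mathcal{M}_a}(\bfx,\bfy)\right)^p$ about its mean. Regarded as a function of the $n_a$ independent samples, relocating a single point alters the $p$-weighted length of an optimal path by a controlled amount, so a Talagrand/bounded-differences concentration inequality yields a sub-Gaussian tail; combined with the convergence of the mean from the first two steps, this gives the advertised exponential deviation bound for a \emph{fixed} pair $(\bfx,\bfy)$. The supremum over pairs is then handled by a net-plus-union-bound (chaining) argument: cover the admissible set of endpoints by a net, take a union bound over net points, and absorb the off-net error using the Lipschitz dependence of both $\left(d^{(p)}_{\mathcal{M}_a}\right)^p$ and $\text{dist}^{(p)}_a$ on their endpoints. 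The cutoff $\text{dist}_a(\bfx,\bfy) \geq r_a = n_a^{(1-p)/pd_a}$ is exactly the scale below which a generic pair has too few intervening samples for the law of large numbers to act, and balancing the cardinality of the net against the per-pair exponential rate is what produces the fractional exponent $n_a^{1/p(d_a+2p)}$, with the $\mathcal{O}(\log(n_a))$ term bookkeeping the net size.

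The main obstacle is the third step: extracting \emph{exponential} (not merely polynomial) concentration at the sharp rate, and then calibrating the net scale so that the union bound neither swamps the single-pair tail nor smears out the supremum. The subadditive limit of the first step is classical in spirit, and the density-to-warped-metric computation of the second is in essence a change of variables; but controlling the fluctuations of shortest-path functionals finely enough to survive a union bound over a continuum of endpoints --- while simultaneously keeping the flat-space approximation error coming from the curvature of $\mathcal{M}_a$ below the concentration scale --- is the genuinely delicate part of the argument.
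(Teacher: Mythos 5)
The first thing to note is that the paper does not prove this theorem at all: it is imported, up to notation, from the literature. Theorem \ref{theorem:FromHDH} is Theorem 1 of \cite{Hwang2016}, and the paper's only contribution here is the accompanying remark that the constraint $\text{dist}_a(\bfx,\bfy)\geq b$ (for fixed $b$) in \cite{Hwang2016} may be relaxed to $\text{dist}_a(\bfx,\bfy)\geq r_a$, which is legitimate because $n_a r_a^{d_a}/\log n_a = n_a^{1/p}/\log n_a \to \infty$. So the fair comparison is with the proof in \cite{Hwang2016} (and its Euclidean precursor \cite{Howard2001}), and judged against that, your sketch does reproduce the correct architecture: a subadditivity/time-constant step, a scaling step converting local intensity $n_a\mu_a$ into the warped metric $\text{dist}_a^{(p)}$ of \eqref{eq:DefineWarpedDistance}, and a concentration-plus-covering step to get uniformity over endpoints. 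Your sanity check $C(d_a,1)=1$ is also correct.

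However, as a proof your proposal has genuine gaps exactly where the difficulty lives. First, $(d^{(p)})^p$ is not \emph{exactly} subadditive: concatenating near-optimal paths at an intermediate point $\bfy$ requires $\bfy$ to be a sample point, so one only gets approximate subadditivity after rerouting through the nearest sample, and controlling that error is itself part of the work in \cite{Howard2001}. Second, and more seriously, your concentration step as stated would fail: a bounded-differences argument over the $n_a$ independent samples yields a bound of the form $\exp\left(-c\epsilon^{2}/(n_a\Delta^{2})\right)$, where $\Delta$ is the worst-case change in the functional upon relocating a single point; since $\Delta$ need not decay while the number of coordinates grows like $n_a$, this bound degrades rather than improves as $n_a\to\infty$. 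Even Talagrand-type inequalities only give concentration at the scale of the standard deviation for a fixed pair; the stretched-exponential rate $e^{-\theta_0' n_a^{1/p(d_a+2p)}}$, uniform over a continuum of endpoint pairs down to the scale $r_a$, is obtained in \cite{Hwang2016} through a considerably more delicate Poissonization, path-localization, and block-decomposition argument, and the specific exponent $1/p(d_a+2p)$ is the output of that calibration. Your sketch asserts this exponent emerges from ``balancing the net against the per-pair tail'' but never produces it; since the entire quantitative content of the theorem is that exponent, deferring it means the theorem is not proved. If the goal is to use the result, as this paper does, the correct move is to cite \cite{Hwang2016} and verify the hypothesis $n_a r_a^{d_a}/\log n_a\to\infty$ --- which is precisely what the paper's remark does.
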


\begin{remark}
This is a slightly modified version Theorem 1 in \cite{Hwang2016}. As stated in \cite{Hwang2016}, the $\sup$ is over $\bfx,\bfy\in\mathcal{M}$ satisfying $\text{dist}_{a}(\bfx,\bfy) \geq b$ for a fixed constant $b$. However immediately below the statement of Theorem 1 the authors acknowledge that one can weaken this assumption to $\text{dist}_{a}(\bfx,\bfy) \geq r_a$ as long as $n_ar_{a}^{d_a}/\log n_a \to \infty$, which is the case for our choice of $r_a$. Note that there is a slight notational discrepancy here. What is called $d^{(p)}_{\mathcal{X}}(\bfx_{\alpha},\bfx_{\beta})$ in \cite{Hwang2016} is our $\left(d^{(p)}_{\mathcal{M}_{a}}(\bfx_{\alpha},\bfx_{\beta})\right)^{p}$.
\end{remark}

\begin{corollary}
With assumptions as in Theorem \ref{theorem:FromHDH}, there exists a constant $\theta^{'}_{0}$, independent of $n_a$ such that:
$$
\mathbb{P}\left[\max_{\bfx_{\alpha},\bfx_{\beta}\in\mathcal{X}_{a}}d_{\mathcal{M}_a}^{(p)}(\bfx_{\alpha},\bfx_{\beta}) \leq C_{a}n_{a}^{(1-p)/pd_a} \right]  \geq 1 - \exp\left(-\theta_{0}^{'} n_{a}^{1/p(d_{a}+ 2p)} + \mathcal{O}(\log(n_a)\right)
$$
where $C_{a}$ is a constant depending on $d_{a},p,\mu_{a}^{\min}$ and $\text{diam}(\mathcal{M}_a)$ but not on $n_a$. 
\label{cor:IntrinsicBound}
\end{corollary}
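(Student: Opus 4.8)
The plan is to convert the concentration estimate of Theorem~\ref{theorem:FromHDH} into a uniform, deterministic upper bound on $d^{(p)}_{\mathcal{M}_a}$ that holds on a high-probability event, while treating separately the pairs of points that the supremum in \eqref{eq:FromHDH} excludes. Fix once and for all some $\epsilon > 0$ (any value, say $\epsilon = 1$, will do) and let $E$ denote the event
$$
\sup_{\substack{\bfx,\bfy \in \mathcal{M}_a \\ \text{dist}_a(\bfx,\bfy) \geq r_a}} \left|\frac{\left(d^{(p)}_{\mathcal{M}_a}(\bfx,\bfy)\right)^p}{n_a^{(1-p)/d_a}\,\text{dist}_a^{(p)}(\bfx,\bfy)} - C(d_a,p)\right| \leq \epsilon,
$$
so that, by Theorem~\ref{theorem:FromHDH}, $\mathbb{P}(E) \geq 1 - \exp(-\theta_0' n_a^{1/p(d_a+2p)} + \mathcal{O}(\log n_a))$. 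It then suffices to show that $E$ is contained in the event $\{\max_{\bfx_\alpha,\bfx_\beta\in\mathcal{X}_a} d^{(p)}_{\mathcal{M}_a}(\bfx_\alpha,\bfx_\beta) \leq C_a n_a^{(1-p)/pd_a}\}$ for a suitable constant $C_a$, since the claimed probability bound then follows by monotonicity of $\mathbb{P}$.

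For the \emph{far pairs}, namely $\bfx_\alpha,\bfx_\beta\in\mathcal{X}_a$ with $\text{dist}_a(\bfx_\alpha,\bfx_\beta)\geq r_a$, I would rearrange the inequality defining $E$ to obtain, on $E$,
$$
\left(d^{(p)}_{\mathcal{M}_a}(\bfx_\alpha,\bfx_\beta)\right)^p \leq (C(d_a,p)+\epsilon)\, n_a^{(1-p)/d_a}\, \text{dist}_a^{(p)}(\bfx_\alpha,\bfx_\beta).
$$
The next step is to bound the power-weighted geodesic distance $\text{dist}_a^{(p)}$ uniformly. Since $\mathcal{M}_a$ is compact and $\mu_a$ is continuous and strictly positive, it attains finite positive extrema on $\mathcal{M}_a$; substituting the appropriate extremum into the integrand of \eqref{eq:DefineWarpedDistance} and pulling it out of the infimum bounds $\text{dist}_a^{(p)}(\bfx,\bfy)$ by a constant multiple of the ordinary geodesic distance, hence by a finite ``weighted diameter'' $\Lambda$ depending only on $p$, $d_a$, the extrema of $\mu_a$, and $\text{diam}(\mathcal{M}_a)$. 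Plugging this in and taking $p$-th roots gives $d^{(p)}_{\mathcal{M}_a}(\bfx_\alpha,\bfx_\beta) \leq [(C(d_a,p)+\epsilon)\Lambda]^{1/p}\, n_a^{(1-p)/pd_a}$ for every far pair.

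The \emph{close pairs}, with $\text{dist}_a(\bfx_\alpha,\bfx_\beta) < r_a$, are handled deterministically and are precisely the reason the result does not follow from Theorem~\ref{theorem:FromHDH} alone. For such a pair the single-leg path realizes $d^{(p)}_{\mathcal{M}_a}(\bfx_\alpha,\bfx_\beta) \leq \text{dist}_a(\bfx_\alpha,\bfx_\beta) < r_a = n_a^{(1-p)/pd_a}$, which already lies within the target bound. Setting $C_a := \max\{1,\,[(C(d_a,p)+\epsilon)\Lambda]^{1/p}\}$ then covers both regimes simultaneously, so that on $E$ the maximum over all pairs in $\mathcal{X}_a$ is at most $C_a n_a^{(1-p)/pd_a}$, completing the proof.

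I expect the only genuinely delicate point to be the bookkeeping around the threshold $r_a$: Theorem~\ref{theorem:FromHDH} controls only pairs at geodesic distance $\geq r_a$, so one must check both that $r_a$ is small enough that the excluded close pairs are automatically within tolerance (immediate here, since the target radius is exactly $r_a$) and that $r_a$ is large enough for the theorem's weakened hypothesis $n_a r_a^{d_a}/\log n_a \to \infty$ to hold (already recorded in the remark following Theorem~\ref{theorem:FromHDH}). Everything else reduces to the routine estimate bounding $\text{dist}_a^{(p)}$ by a weighted diameter.
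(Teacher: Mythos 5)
Your proposal is correct and follows essentially the same route as the paper: the same split into far pairs (controlled by Theorem \ref{theorem:FromHDH} on a high-probability event) and close pairs (handled deterministically by the one-leg path, which gives $d^{(p)}_{\mathcal{M}_a} \leq \text{dist}_a < r_a = n_a^{(1-p)/pd_a}$), the same uniform bound $\text{dist}^{(p)}_a(\bfx,\bfy) \leq \text{diam}(\mathcal{M}_a)/(\mu_a^{\min})^{(p-1)/d_a}$ pulled out of the integral in \eqref{eq:DefineWarpedDistance}, and the same final choice $C_a = \max\{1, \tilde{C}_a\}$. The only cosmetic difference is that the paper fixes $\epsilon < 1$ while you allow any fixed $\epsilon$; both yield a constant independent of $n_a$, so nothing changes.
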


\begin{proof}
First observe that because the because the one leg path $\gamma_{\alpha\to\beta} = \bfx_{\alpha}\to\bfx_{\beta}$ is trivially a path through $\mathcal{X}_a$, for any $\bfx_{\alpha},\bfx_{\beta}$ satisfying $\text{dist}_{a}(\bfx_{\alpha},\bfx_{\beta}) < r_a$ we have that:
$$
d^{(p)}_{\mathcal{M}_a}(\bfx_{\alpha},\bfx_{\beta}) \leq \left(\text{dist}_{a}(\bfx_{\alpha},\bfx_{\beta})^{p}\right)^{1/p} < r_a = n_{a}^{(1-p)/pd_a}
$$
Hence as long as $C_{a} \geq 1$ we get that:
$$
\mathbb{P}\left[\max_{\bfx_{\alpha},\bfx_{\beta}\in\mathcal{X}_{a}}d_{\mathcal{M}_a}^{(p)}(\bfx_{\alpha},\bfx_{\beta}) \leq C_{a}n_{a}^{(1-p)/pd_a} \right] = \mathbb{P}\left[\max_{\substack{\bfx_{\alpha},\bfx_{\beta}\in\mathcal{X}_{a} \\ \text{dist}_a(\bfx_{\alpha},\bfx_{\beta}) \geq r_a}}d_{\mathcal{M}_a}^{(p)}(\bfx_{\alpha},\bfx_{\beta}) \leq C_{a}n_{a}^{(1-p)/pd_a} \right]
$$
Because $\mathcal{X}_a\subset \mathcal{M}_a$ we may use Theorem \ref{theorem:FromHDH} to bound this probability. Indeed, fix any small $\epsilon < 1$. Then there exists a $\theta_0^{'}$ such that with probability at least $1 - \exp\left(-\theta_{0}^{'} n_{a}^{1/p(d_{a} + 2p)} + \mathcal{O}(\log(n_a)\right)$ we have:
\begin{align}
& \left(d^{(p)}_{\mathcal{M}_a}(\bfx_{\alpha},\bfx_{\beta})\right)^{p} \leq (C(d_{a},p) + \epsilon)n_{a}^{(1-p)/d_a}\text{dist}_{a}^{(p)}(\bfx_{\alpha},\bfx_{\beta}) \nonumber \\
\Rightarrow \ & d^{(p)}_{\mathcal{M}_a}(\bfx_{\alpha},\bfx_{\beta}) \leq \left[(C(d_{a},p) + \epsilon)\text{dist}_{a}^{(p)}(\bfx_{\alpha},\bfx_{\beta})\right]^{1/p}n_{a}^{(1-p)/pd_a} \label{eq:UpperBoundBracket} 
\end{align}
for all $\bfx_{\alpha},\bfx_{\beta}\in\mathcal{X}_a$ satisfying $\text{dist}_a(\bfx_{\alpha},\bfx_{\beta}) \geq r_a$. We now upper-bound the bracketed quantity in \eqref{eq:UpperBoundBracket}. From the definition of $\text{dist}_{a}^{(p)}$ (see \eqref{eq:DefineWarpedDistance})
\begin{align}
  \text{dist}_{a}^{(p)}(\bfx,\bfy) & \leq \frac{1}{(\mu_{a}^{\min})^{(p-1)/d_a}}\inf_{\eta}\int^{1}_{0}\sqrt{g_{a}(\eta_{t}^{'},\eta_{t}^{'})}dt  = \frac{1}{(\mu_{a}^{\min})^{(p-1)/d_a}}\text{dist}_{a}(\bfx,\bfy)
\label{eq:p_deformed_Distance}
\end{align}

Because $\mathcal{M}_a$ is compact and embedded, its diameter (see \S \ref{sec:DataModel}) is finite, and $\text{dist}_{a}(\bfx,\bfy) \leq \text{diam}(\mathcal{M}_a)$. So for all $\bfx_{\alpha},\bfx_{\beta}\in\mathcal{X}$ with $\text{dist}_a(\bfx_{\alpha},\bfx_{\beta}) \geq r_a$:
$$
\left[(C(d_{a},p) + \epsilon)\text{dist}_{a}^{(p)}(\bfx_{\alpha},\bfx_{\beta})\right]^{1/p} \leq (C(d_{a},p) + \epsilon)^{1/p}\frac{\text{diam}(\mathcal{M}_a)^{1/p}}{(\mu_{a}^{\min})^{(p-1)/pd_a}} =: \tilde{C}_a
$$
Defining $C_{a} = \max\{\tilde{C}_a,1\}$ we get that
$$
\mathbb{P}\left[\max_{\substack{\bfx_{\alpha},\bfx_{\beta}\in\mathcal{X}_{a} \\ \text{dist}_a(\bfx_{\alpha},\bfx_{\beta}) \geq r_a}}d^{(p)}_{\mathcal{M}_a}(\bfx_{\alpha},\bfx_{\beta}) \leq C_{a}n_{a}^{(1-p)/(pd_a)} \right] \geq 1 - \exp\left(-\theta_{0}^{'} n_{a}^{1/p(d_{a} + 2p)} + \mathcal{O}(\log(n_a)\right)
$$

thus proving the corollary. 
\end{proof}
Finally, it remains to compare the path distance with Euclidean legs, $d^{(p)}_{\mathcal{X}_a}$, to the path distance with geodesic legs, $d^{(p)}_{\mathcal{M}_a}$.

\begin{lemma}
For any $\bfx,\bfy\in \mathcal{M}_a$, and for all $a\in [k]$, $d^{(p)}_{\mathcal{X}_{a}}(\bfx,\bfy)\leq d^{(p)}_{\mathcal{M}_a}(\bfx,\bfy)$
\label{lemma:Compare_Euclidean_and_Riemannian}
\end{lemma}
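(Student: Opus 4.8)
The plan is to exploit the fact that $d^{(p)}_{\mathcal{X}_a}$ and $d^{(p)}_{\mathcal{M}_a}$ minimize the \emph{same} $p$-weighted length functional over \emph{exactly the same} collection of admissible paths---namely, paths from $\bfx$ to $\bfy$ through $\mathcal{X}_a$---the sole difference being that the legs are measured with the Euclidean norm $\|\cdot\|$ in the former (see \eqref{eq:DefinitionOfPathLength}) and with the geodesic distance $\text{dist}_a$ in the latter (see \eqref{eq:IntrinsicPathDist}). Hence the whole lemma reduces to a leg-by-leg comparison of the two ways of measuring distance.

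First I would establish the elementary but essential inequality $\|\bfx - \bfy\| \leq \text{dist}_a(\bfx,\bfy)$ for all $\bfx,\bfy \in \mathcal{M}_a$. This follows directly from the definition of $\text{dist}_a$ given in \S \ref{sec:DataModel}: since $g_a$ is the restriction of the Euclidean metric to $\mathcal{M}_a$, we have $\sqrt{g_a(\eta',\eta')} = \|\eta'(t)\|$, so the Riemannian length of any admissible curve $\eta$ equals its Euclidean arc length $\int_0^1 \|\eta'(t)\|\,dt$, which is bounded below by the straight-line distance $\|\bfx-\bfy\|$. Taking the infimum over $\eta$ gives the claim.

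Next I would fix an arbitrary path $\gamma = \bfx \to \bfx_{i_1} \to \ldots \to \bfx_{i_m} \to \bfy$ through $\mathcal{X}_a$ and apply the leg inequality termwise. Since each $\|\bfx_{i_{j+1}} - \bfx_{i_j}\| \leq \text{dist}_a(\bfx_{i_{j+1}}, \bfx_{i_j})$ and both $t \mapsto t^p$ and $t \mapsto t^{1/p}$ are monotone increasing on $[0,\infty)$, the Euclidean-legged length of $\gamma$ is at most its geodesic-legged length. Finally I would let $\gamma^{*}$ be a minimizer of the geodesic-legged length (achieving $d^{(p)}_{\mathcal{M}_a}(\bfx,\bfy)$) and observe that $d^{(p)}_{\mathcal{X}_a}(\bfx,\bfy)$, being a minimum of the Euclidean-legged length over all admissible paths, is bounded above by the Euclidean-legged length of $\gamma^{*}$, which in turn is at most the geodesic-legged length of $\gamma^{*}$, namely $d^{(p)}_{\mathcal{M}_a}(\bfx,\bfy)$.

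There is no serious obstacle here; the only points requiring care are making explicit that both distances range over the identical path set, so that the final minimization step is legitimate, and checking the endpoint convention. In particular, the endpoints $\bfx,\bfy$ need not lie in $\mathcal{X}_a$, so one should confirm that the leg inequality applies to the first and last legs as well---which it does, precisely because $\bfx,\bfy \in \mathcal{M}_a$ and all intermediate vertices lie in $\mathcal{X}_a \subset \mathcal{M}_a$.
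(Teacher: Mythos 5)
Your proposal is correct and follows essentially the same route as the paper's proof: establish the leg-wise inequality $\|\bfx-\bfy\| \leq \mathrm{dist}_a(\bfx,\bfy)$, sum over the legs of an arbitrary path through $\mathcal{X}_a$, and then pass to the minimum over the (identical) set of admissible paths. The only differences are cosmetic---you justify the leg inequality in more detail and phrase the final step via a minimizing path rather than by comparing the two minima directly.
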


\begin{proof}
Observe that for any $\bfx,\bfy\in\mathcal{M}_a$, $\|\bfx - \bfy\| \leq \text{dist}_{a}(\bfx,\bfy)$. It follows that for any path $\gamma = \bfx\to\bfx_{i_1}\to \ldots \to\bfx_{i_m}\to \bfy$ through $\mathcal{X}_a$:
$$
\sum_{j=0}^{m} \|\bfx_{i_{j+1}} - \bfx_{i_{j}}\|^{p} \leq \sum_{j=0}^{m}\text{dist}_{a}(\bfx_{i_{j+1}},\bfx_{i_j})^{p}
$$
and so:
\begin{align*}
\left(d^{(p)}_{\mathcal{X}_{a}}(\bfx,\bfy)\right)^{p} & = \min_{\gamma}\left\{\sum_{j=0}^{m} \|\bfx_{i_{j+1}} - \bfx_{i_{j}}\|^{p}\right\} \leq \min_{\gamma}\left\{\sum_{j=0}^{m}\text{dist}_{a}(\bfx_{i_{j+1}},\bfx_{i_j})^{p} \right\} = \left(d^{(p)}_{\mathcal{M}_a}(\bfx,\bfy)\right)^{p}
\end{align*}
whence the result follows.
\end{proof}

\subsection{Paths Between Points in the Same Cluster}
Let us now return to the full distance function $d^{(p)}_{\mathcal{X}}$.

\begin{theorem}
\label{theorem:ShortestPathsBound_eps1}
Define $\epsilon_1$ to be the maximal distance between points in the same cluster:
$$
\epsilon_{1} := \max_{a\in [\ell]} \max_{\bfx_{\alpha},\bfx_{\beta} \in \mathcal{X}_a} d^{(p)}_{\mathcal{X}}(\bfx_{\alpha},\bfx_{\beta})
$$
With assumptions as in \S \ref{sec:DataModel}, for any $1\leq p < \infty$:
$$
\mathbb{P}\left[ \epsilon_1 \leq C n^{(1-p)/pd_{\max}}\right] \geq 1 - \exp\left(-\theta_{0}^{'}n_{\min}^{1/p(d_{\max} + 2p)} + O(\log n)\right) 
$$
\end{theorem}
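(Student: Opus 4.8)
The plan is to chain three inequalities that bound the full path distance $d^{(p)}_{\mathcal{X}}$ on same-cluster pairs by a quantity governed by Corollary \ref{cor:IntrinsicBound}, and then to control the global event by a union bound over the $\ell$ clusters. First I would fix a cluster index $a$ and a pair $\bfx_{\alpha},\bfx_{\beta}\in\mathcal{X}_a$. Combining the monotonicity observation $d^{(p)}_{\mathcal{X}}(\bfx_{\alpha},\bfx_{\beta})\leq d^{(p)}_{\mathcal{X}_a}(\bfx_{\alpha},\bfx_{\beta})$ (restricting admissible paths to those through $\mathcal{X}_a$ only increases the distance) with Lemma \ref{lemma:Compare_Euclidean_and_Riemannian} ($d^{(p)}_{\mathcal{X}_a}\leq d^{(p)}_{\mathcal{M}_a}$) yields
$$
d^{(p)}_{\mathcal{X}}(\bfx_{\alpha},\bfx_{\beta}) \leq d^{(p)}_{\mathcal{M}_a}(\bfx_{\alpha},\bfx_{\beta}).
$$
Maximizing over same-cluster pairs and over $a$ reduces the theorem to bounding $\max_a\max_{\bfx_{\alpha},\bfx_{\beta}\in\mathcal{X}_a} d^{(p)}_{\mathcal{M}_a}(\bfx_{\alpha},\bfx_{\beta})$, which is precisely what Corollary \ref{cor:IntrinsicBound} controls cluster by cluster.

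Next I would define, for each $a$, the good event $A_a=\{\max_{\bfx_{\alpha},\bfx_{\beta}\in\mathcal{X}_a} d^{(p)}_{\mathcal{M}_a}(\bfx_{\alpha},\bfx_{\beta})\leq C_a n_a^{(1-p)/pd_a}\}$, and apply Corollary \ref{cor:IntrinsicBound} to obtain $\mathbb{P}[A_a^{c}]\leq \exp(-\theta_0' n_a^{1/p(d_a+2p)}+O(\log n_a))$, choosing $\theta_0'$ as the minimum of the cluster-wise constants so a single constant suffices. On the intersection $\bigcap_a A_a$ every same-cluster distance is simultaneously controlled, so $\epsilon_1\leq \max_a C_a n_a^{(1-p)/pd_a}$, and $\mathbb{P}[\bigcap_a A_a]\geq 1-\sum_a\mathbb{P}[A_a^{c}]$.

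Two monotonicity reductions then finish the argument. For the deterministic bound, put $C=\max_a C_a$; since $p\geq 1$ gives $(1-p)\leq 0$ and $d_a\leq d_{\max}$ gives $(1-p)/pd_a\leq (1-p)/pd_{\max}\leq 0$, and since $n_a\geq n_{\min}\geq 1$, a negative power being decreasing in its base yields $n_a^{(1-p)/pd_a}\leq n_a^{(1-p)/pd_{\max}}\leq n_{\min}^{(1-p)/pd_{\max}}$. For the probability, the same sign considerations show that both the base and the (positive) exponent move the right way: $n_a\geq n_{\min}\geq 1$ and $\tfrac{1}{p(d_a+2p)}\geq\tfrac{1}{p(d_{\max}+2p)}$ give $n_a^{1/p(d_a+2p)}\geq n_{\min}^{1/p(d_{\max}+2p)}$, so $\sum_a\mathbb{P}[A_a^{c}]\leq \ell\,\exp(-\theta_0' n_{\min}^{1/p(d_{\max}+2p)}+O(\log n_{\min}))$; absorbing the factor $\ell$ (via $\log\ell=O(\log n)$) and the gap between $\log n_{\min}$ and $\log n$ into the $O(\log n)$ term gives the stated failure probability.

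The main obstacle is the final translation from the natural cluster-wise bound $n_{\min}^{(1-p)/pd_{\max}}$ to the stated bound $n^{(1-p)/pd_{\max}}$ in the total sample size $n=|\mathcal{X}|$. Because the exponent is negative, one has $n^{(1-p)/pd_{\max}}\leq n_{\min}^{(1-p)/pd_{\max}}$, so this replacement is \emph{not} free; it requires the clusters to be comparably sized, i.e. $n_{\min}\geq c\,n$ for a constant $c>0$ (equivalently, with $\ell$ fixed, each cluster carries a positive proportion of the points). Under this balance hypothesis $n_{\min}^{(1-p)/pd_{\max}}\leq c^{(1-p)/pd_{\max}}\,n^{(1-p)/pd_{\max}}$, and the extra constant is absorbed into $C$. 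I would either state this balance hypothesis explicitly or, as in the sharper form already announced in the introduction to this section, present the conclusion directly in terms of $n_{\min}$.
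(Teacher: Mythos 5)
Your proof follows the same route as the paper's: the chain $d^{(p)}_{\mathcal{X}}(\bfx_{\alpha},\bfx_{\beta}) \leq d^{(p)}_{\mathcal{X}_a}(\bfx_{\alpha},\bfx_{\beta}) \leq d^{(p)}_{\mathcal{M}_a}(\bfx_{\alpha},\bfx_{\beta})$, Corollary \ref{cor:IntrinsicBound} applied cluster by cluster, monotonicity in $n_a$ and $d_a$ for both the deterministic bound and the failure probability, and a union bound over the $\ell$ clusters. The one place you diverge is the final step, and your instinct there is correct: the paper's proof asserts ``Clearly $C_{a}n_{a}^{(1-p)/pd_a} \leq Cn^{(1-p)/pd_{\max}}$,'' but since the exponent $(1-p)/pd_{\max}$ is negative for $p>1$ and $n_a \leq n$, the comparison between $n_a^{(1-p)/pd_{\max}}$ and $n^{(1-p)/pd_{\max}}$ actually runs the other way (take $n_1 = 2$, $n = 1000$, $p=2$, $d_a = d_{\max} = 1$: then $n_1^{-1/2} \approx 0.71 \not\leq n^{-1/2} \approx 0.03$). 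What follows honestly from Corollary \ref{cor:IntrinsicBound} is $\epsilon_1 \leq C\, n_{\min}^{(1-p)/pd_{\max}}$, which is exactly the form announced in the bulleted summary at the start of \S \ref{sec:Theory}. So your explicit flagging of this gap --- and your two proposed remedies, either assuming comparably sized clusters ($n_{\min} \geq c\,n$ for fixed $c>0$, absorbing $c^{(1-p)/pd_{\max}}$ into $C$) or restating the conclusion in terms of $n_{\min}$ --- is not a stylistic quibble but a genuine correction that the paper's own proof needs; everything else in your argument matches the paper's.
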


\begin{proof}
First, for all $\bfx_{\alpha},\bfx_{\beta}\in\mathcal{X}_a$ observe that:
\begin{equation}
\max_{\bfx_{\alpha},\bfx_{\beta}\in\mathcal{X}_a}d^{(p)}_{\mathcal{X}}(\bfx_{\alpha},\bfx_{\beta}) \leq \max_{\bfx_{\alpha},\bfx_{\beta}\in\mathcal{X}_a} d^{(p)}_{\mathcal{X}_{a}}(\bfx_{\alpha},\bfx_{\beta}) \leq \max_{\bfx_{\alpha},\bfx_{\beta}\in\mathcal{X}_a}d^{(p)}_{\mathcal{M}_a}(\bfx_{\alpha},\bfx_{\beta})
\label{eq:Schrute}
\end{equation}
where the first inequality is because $\mathcal{X}_a\subset\mathcal{X}$ and the second is Lemma \ref{lemma:Compare_Euclidean_and_Riemannian}. Now let $C:= \max_{a} C_a$. Clearly $C_{a}n_{a}^{(1-p)/pd_a} \leq Cn^{(1-p)/pd_{\max}}$ and similarly:
$$
\exp\left(-\theta_{0}^{'} n_{a}^{1/p(d_{a}+ 2p)} + \mathcal{O}(\log(n_a)\right) \leq \exp\left(-\theta_{0}^{'} n_{\min}^{1/p(d_{\max}+ 2p)} + \mathcal{O}(\log(n_a)\right)
$$

combining these observations, \eqref{eq:Schrute} and Corollary \ref{cor:IntrinsicBound}: 
$$
\mathbb{P}\left[\max_{\bfx_{\alpha},\bfx_{\beta}\in\mathcal{X}_{a}}d_{\mathcal{M}_a}^{(p)}(\bfx_{\alpha},\bfx_{\beta}) \leq Cn^{(1-p)/pd_{\max}} \right]  \geq 1 - \exp\left(-\theta_{0}^{'} n_{\min}^{1/p(d_{\max}+ 2p)} + \mathcal{O}(\log(n_a)\right)
$$
for all $a = 1,\ldots, k$. By the union bound, and the definition of $\epsilon_1$:
\begin{align*}
\mathbb{P}\left[\epsilon_1 \leq Cn^{(1-p)/pd_{\max}}\right] & \geq 1 - \exp\left(-\theta_{0}^{'} n_{\min}^{1/p(d_{\max}+ 2p)}\right)\sum_{a=1}^{k}\exp\left(\mathcal{O}(\log(n_a) \right) \\
& = 1 - \exp\left(-\theta_{0}^{'} n_{\min}^{1/p(d_{\max}+ 2p)} + \mathcal{O}(\log n)\right)
\end{align*}
\end{proof}

\subsection{Discussion}
Theorem \ref{theorem:ShortestPathsBound_eps1} reveals an interesting tradeoff, already present in the work of \cite{Howard2001,Hwang2016}, namely that by increasing $p$ we get a tighter upper bound on $\epsilon_1$, but it holds with lower probability. We find experimental evidence for this in \S \ref{sec:ExperimentalResults}. In \cite{Arias2011} and \cite{Little2017} bounds analogous to those in Theorems \ref{lemma:Bound_epsilon2} and \ref{theorem:ShortestPathsBound_eps1}, but for $\|\cdot\|$ and $d^{(\infty)}_{\mathcal{X}}$ respectively, are used to provide bounds on the performance of single-linkage heirarchical clustering and spectral clustering with a full similarity matrix. As the focus of this article is clustering with a $k$ nearest neighbors similarity matrix, we do not pursue this line of inquiry further here.

\section{A Fast Algorithm for $p$-wspm Nearest Neighbors}
\label{sec:FastDijkstra}
In this section we start from a more general perspective. Let $G = (V,E,A)$ be a weighted graph with weighted adjacency matrix $A$. Occasionally we shall find it more convenient not to fix an ordering of the vertices, in which case $A(u,v)$ will represent the weight of the edge $\{u,v\}$. We assume all edge weights are positive. For any $v\in V$ we denote its set of neighbors by $\mathcal{N}(v)$. By $\gamma = u \to w_{1} \to \ldots \to w_{m} \to v$ we shall mean the path from $u$ to $v$ in $G$ through $w_{1},\ldots, w_{m}$.
Here, this is only valid if $\{u,w_{1}\},\ldots, \{w_{i},w_{i+1}\},\ldots, \{w_{m},v\}$ are all edges in $G$. In analogy with \S \ref{section:PowerWeightedSPM} we maintain the convention that for such a path $\gamma$, $w_{0} = u$ and $w_{m+1} = v$. Define the {\em length} of $\gamma$ as the sum of all its edge weights: $L(\gamma) := \sum_{i=0}^{m} A(w_{i},w_{i+1})$ and similarly define the longest-leg length of $\gamma$ as: $L^{(\infty)}(\gamma) = \max_{i=0}^{m} A(w_{i},w_{i+1})$. For any $u,v \in V$ define the {\em shortest path distance} as:
$$
d_{G}(u,v) = \min \{ L(\gamma): \ \gamma \text{ a path from $u$ to $v$} \}
$$
and analogously define the longest-leg path distance as:
$$
d^{(\infty)}_{G}(u,v) = \min \{ L^{\infty}(\gamma): \ \gamma \text{ a path from $u$ to $v$} \}
$$


\begin{definition}
Let $\mathcal{N}_{k,G}(v)$ denote the set of $k$ nearest neighbors of $v\in V$. That is, $\mathcal{N}_{k,G}(v) = \{w_1,\ldots, w_k\}$ with $A(v,w_1)\leq A(v,w_2) \leq \ldots \leq A(v,w_k) \leq A(v,w)$ for all $w \in V\setminus\{w_1,\ldots, w_k\}$. By convention, we take $v\in \mathcal{N}_{k,G}(v)$ 
\end{definition}

\begin{definition}
\label{def:kNN_Graph}
For any graph $G$, define a directed $k$ nearest neighbors graph $G^{(k)}$ with directed edges $(u,v)$ whenever $v \in \mathcal{N}_{k,G}(u)$.
\end{definition}
In practice we do not compute the entire edge set of $G^{(k)}$, but rather just compute the sets $\mathcal{N}_{k,G}(u)$ as it becomes necessary.

\begin{definition}
Let $\mathcal{N}^{d_{G}}_{k,G}(v)$ denote the set of $k$ vertices which are closest to $v$ {\em in the shortest-path distance} $d_{G}$. That is, $\mathcal{N}^{d_{G}}_{k,G}(v) = \{w_1,\ldots, w_k\}$ and $d_{G}(v,w_1)\leq d_{G}(v,w_2)\leq \ldots d_{G}(v,w_k)\leq d_{G}(v,w)$ for all $w\in V\setminus\mathcal{N}^{d_{G}}_{k,G}(v)$. By convention, we take $v$ to be in $\mathcal{N}^{d_{G}}_{k,G}(v)$. Similarly, we define $\mathcal{N}^{(\infty)}_{k,G}(v)$ to be the $k$ vertices closest to $v$ in the metric $d^{(\infty)}_{G}$.
\end{definition}

We have not specified how to break ties in the definition of $\mathcal{N}_{k,G}(v), \mathcal{N}^{d_{G}}_{k,G}(v)$ or $\mathcal{N}^{(\infty)}_{k,G}(v)$. For the results of this section to hold, any method will suffice, as long as we use the same method in all three cases. To simplify the exposition, we shall assume henceforth that all distances are distinct.

\begin{remark}
\label{remark:Relating_graph_dist_to_p_dist}
Let us relate this to the discussion in previous sections. For any set of data points $\mathcal{X} = \{\bfx_1,\ldots, \bfx_{n}\}\subset\mathbb{R}^{D}$ and any power weighting $1 \leq p < \infty$ one can form a graph $G$ on $n$ vertices, where the vertex $v_i$ corresponds to the data point $\bfx_{i}$, and edge weights $A_{ij} = \|\bfx_{i} - \bfx_{j}\|^{p}$. Then:
$$
d_{G}(v_{i},v_{j}) = \left(d^{(p)}_{\mathcal{X}}(\bfx_{i},\bfx_{j})\right)^{p}
$$
Moreover, if we denote by $\mathcal{N}^{(p)}_{k,\mathcal{X}}(\bfx_i)$ the $k$ nearest neighbors, with respect to $d^{(p)}_{\mathcal{X}}$, of $\bfx_i\in\mathcal{X}$, then:
$$
\mathcal{N}^{(p)}_{k,\mathcal{X}}(\bfx_i) = \mathcal{N}_{k,G}^{d_{G}}(v_i)
$$
The analogous results also hold for $d_{\mathcal{X}}^{(\infty)}$, {\em ie}
$$
d_{G}^{(\infty)}(v_{i},v_{j}) = d^{(\infty)}_{\mathcal{X}}(\bfx_{i},\bfx_{j}) \text{ and } \mathcal{N}^{(\infty)}_{k,\mathcal{X}}(\bfx_i) = \mathcal{N}_{k,G}^{(\infty)}(v_i)
$$
\end{remark}


Before proceeding, let us briefly review how Dijkstra's algorithm works. For a graph $G$ with non-negative edge weights, and a given source vertex, $s$, Dijkstra will return a list of pairs of the form $(u,d_{G}(s,u))$. The computational complexity of Dijkstra's algorithm is $\mathcal{O}(|E| + n\log n)$ so when $G$ is a complete graph the complexity is $\mathcal{O}(n^2)$. Intuitively, our proposed algorithm (Algorithm \ref{alg:Dijkstra2}) sidesteps this complexity by allowing one to run Dijkstra's algorithm on a much sparser graph, as long as one is only interested in determining the identity and path distance to the $k$ nearest neighbors of $s$, with respect to the path distance $d_{G}$ (see Lemma \ref{lemma:NN_Graph}).\\ 

The following implementation of Dijkstra's algorithm is as in \cite{Cormen2009}. The min-priority queue operations $\text{\tt decreaseKey}, \text{\tt insert}$ and $\text{\tt extractMin}$ have their standard definitions (see, for example Chpt. 6 of \cite{Cormen2009}). For any vertex $s\in V$ and any subset $W\subset V$, we shall also use the shorthand $\text{\tt makeQueue}(W,s)$ to denote the process of initializing a min-priority queue with $\text{\tt key}[s] = 0$ and $\text{\tt key}[v] = +\infty$ for all $v\in W\setminus s$. 

\begin{algorithm}
\caption{Dijkstra}
\label{alg:Dijkstra}
\begin{algorithmic}[1]
    \State {\bf Input:} weighted graph G, source vertex $s$.
    \State {\bf Output:} List $S$ containing $(u,d_{G}(s,u))$ for all $u\in V$.
	\State {\bf Initialize:} $Q \gets \text{\tt makeQueue}(V,s)$. Empty list $S$.
    \While{ $Q$ is non-empty}
    	\State $u \gets $ {\tt extractMin}(Q)
        \State Append $(u,\text{\tt key}[u])$ to $S$. \Comment{Once $u$ is extracted $\text{\tt key}[u]$ is shortest path length from $s$.}
        \For {$v\in \mathcal{N}(u)$} \Comment{ $\mathcal{N}(u)$ is the set of all vertices adjacent to $u$}
        \State $\text{tempDist} \gets \text{\tt key}[u] + A(u,v)$
        \If{$\text{tempDist} < \text{\tt key}[v]$}
        \State $\text{\tt key}[v] \gets \text{tempDist}$ \Comment{Update the distance from $s$ to $v$ if path through $u$ is shorter}
        \EndIf
        \EndFor
    \EndWhile
\State {\bf Output:} $S$

\end{algorithmic}
\end{algorithm}

Note once $u$ is popped in step $4$, $\text{\tt key}[u] = d_{G}(s,u)$. Our first key observation is the following:

\begin{lemma}
\label{lemma:CorrectPoppingOrder}
Suppose that all weights are non-negative: $A_{ij} \geq 0$. If $u_{i}$ is the $i$-th vertex to be removed from $Q$ at step $11$, then $u_{i}$ is the $i$-th closest vertex to $s$.
\end{lemma}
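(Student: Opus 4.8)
The plan is to prove, by strong induction on $i$, two statements simultaneously: (A) at the moment $u_i$ is extracted its key equals the true shortest-path distance, $\text{\tt key}[u_i] = d_G(s,u_i)$; and (B) the extracted distances are non-decreasing, $d_G(s,u_1) \le d_G(s,u_2) \le \cdots$. Together with the standing assumption that all pairwise distances are distinct, (A) and (B) immediately give the lemma: since every vertex is extracted exactly once and the extraction order lists distances in non-decreasing order, $u_i$ must be the vertex with the $i$-th smallest distance to $s$, i.e.\ the $i$-th closest. The base case $i=1$ is immediate, since $u_1 = s$ is popped with $\text{\tt key}[s] = 0 = d_G(s,s)$, which is the unique minimum.

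The engine of the induction is a loop invariant describing the keys. Writing $S_{i-1} = \{u_1,\dots,u_{i-1}\}$ for the set of already-extracted vertices just before $u_i$ is popped, I would maintain: for every vertex $v$ still in $Q$, $\text{\tt key}[v]$ equals the length of the shortest path from $s$ to $v$ whose intermediate vertices all lie in $S_{i-1}$ (and $+\infty$ if no such path exists). This invariant is preserved by the inner relaxation loop: when $u_{i-1}$ is processed, each neighbor $v$ is offered the value $\text{\tt key}[u_{i-1}] + A(u_{i-1},v)$, which is exactly the cost of extending an optimal $S_{i-2}$-restricted path to $u_{i-1}$ by one edge to $v$, and taking the minimum with the old value records the best restricted path now that $u_{i-1}$ has joined the permitted interior.

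The crux is deriving (A) from this invariant via an exchange argument. We always have $d_G(s,u_i) \le \text{\tt key}[u_i]$ because $\text{\tt key}[u_i]$ is the length of an actual path. For the reverse inequality, take any path $P$ from $s$ to $u_i$; since $s \in S_{i-1}$ but $u_i \notin S_{i-1}$, let $y$ be the first vertex of $P$ lying outside $S_{i-1}$. The prefix of $P$ up to $y$ has all interior vertices in $S_{i-1}$, so by the invariant its length is at least $\text{\tt key}[y]$; because $u_i$ was chosen by \texttt{extractMin} as the queue vertex of minimum key, $\text{\tt key}[y] \ge \text{\tt key}[u_i]$; and the remaining suffix of $P$ from $y$ to $u_i$ has nonnegative length since $A_{ij}\ge 0$. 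Hence $L(P) \ge \text{\tt key}[u_i]$, and minimizing over all $P$ yields $d_G(s,u_i) \ge \text{\tt key}[u_i]$, establishing (A). This is the step where nonnegativity of the weights is essential, and where the argument would fail for general signed weights.

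Finally, (B) follows from a short monotonicity observation. At the instant $u_{i-1}$ was extracted it had the minimum key in $Q$, so every remaining key was $\ge \text{\tt key}[u_{i-1}]$; the subsequent relaxations only replace a key by $\text{\tt key}[u_{i-1}] + A(u_{i-1},v) \ge \text{\tt key}[u_{i-1}]$, so all keys in $Q$ stay $\ge \text{\tt key}[u_{i-1}]$. Therefore $\text{\tt key}[u_i] \ge \text{\tt key}[u_{i-1}]$, and combined with (A) this gives $d_G(s,u_i) \ge d_G(s,u_{i-1})$, closing the induction. I expect the careful bookkeeping of the restricted-path invariant through the relaxation loop to be the most delicate part to write, while the exchange argument for (A) is the conceptual heart of the proof.
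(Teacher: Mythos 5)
Your proof is correct, and it is essentially the argument the paper relies on: the paper offers no proof of its own here, deferring instead to the standard treatment in Cormen et al., and your induction with the restricted-path invariant, the cut/exchange argument (where nonnegativity of weights enters), and the monotonicity of extracted keys is precisely that classical Dijkstra correctness proof, combined as the paper intends with the standing assumption of distinct distances to convert ``keys are correct and non-decreasing'' into ``the $i$-th popped vertex is the $i$-th closest.''
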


\begin{proof}
See, for example, the discussion in \cite{Cormen2009}.
\end{proof}

It follows that, if one is only interested in finding the $k$ nearest neighbors of $s$ in the path distance $d_{G}$, one need only iterate through the ``while'' loop $3\to 12$ $k$ times. There is a further inefficiency, which was also highlighted in \cite{Bijral2011}. The ``for'' loop 6--10 iterates over all neighbors of $u$. The graphs we are interested in are, implicitly, fully connected, hence this for loop iterates over all $n-1$ other vertices at each step. We fix this with the following observation:

\begin{lemma}
For any graph $G$, let $G^{(k)}$ denote its $k$-Nearest-Neighbor graph (see Definition \ref{def:kNN_Graph}). Then: 
$$
\mathcal{N}^{d_{G}}_{k,G}(v) = \mathcal{N}^{d_{G^{(k)}}}_{k,G^{(k)}}(v) \quad \text{ for all } v
$$
Note that in the directed graph $G^{(k)}$, we consider only paths that traverse each edge in the `correct' direction.
\label{lemma:NN_Graph}
\end{lemma}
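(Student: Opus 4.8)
The plan is to exploit that $G^{(k)}$ is a subgraph of $G$. Since every path in $G^{(k)}$ is also a path in $G$, deleting edges can only increase shortest-path distances, so $d_{G^{(k)}}(v,w) \ge d_{G}(v,w)$ for all $v,w$. Hence it suffices to prove the reverse inequality for the vertices that are among the $k$ nearest neighbors of $v$ in $d_{G}$, and to check that no spurious closer vertices appear. Concretely, I would establish the structural claim that for every $w \in \mathcal{N}^{d_{G}}_{k,G}(v)$ some $G$-shortest path from $v$ to $w$ uses only edges of $G^{(k)}$, each traversed in the correct direction. Granting this, such a path survives in $G^{(k)}$, so $d_{G^{(k)}}(v,w) \le d_{G}(v,w)$, which combined with the subgraph inequality gives equality on $\mathcal{N}^{d_{G}}_{k,G}(v)$.

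First I would record two consequences of optimal substructure: if $v = u_0 \to u_1 \to \cdots \to u_m = w$ is a $G$-shortest path, then every prefix is itself shortest, so $d_{G}(v,u_{i+1}) = d_{G}(v,u_i) + A(u_i,u_{i+1})$, and (using positivity of the weights) each intermediate $u_i$ is strictly closer to $v$ than $w$, so every vertex on the path lies in $\mathcal{N}^{d_{G}}_{k,G}(v)$. The crux, and the step I expect to be the main obstacle, is showing each traversed edge $(u_i,u_{i+1})$ is an edge of $G^{(k)}$, i.e. $u_{i+1} \in \mathcal{N}_{k,G}(u_i)$. I would argue by contradiction: if $u_{i+1} \notin \mathcal{N}_{k,G}(u_i)$, then the $k$ members $\{u_i, y_1,\ldots,y_{k-1}\}$ of $\mathcal{N}_{k,G}(u_i)$ all satisfy $A(u_i,\cdot) < A(u_i,u_{i+1})$. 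For each $y_l$ the triangle inequality for $d_{G}$ gives $d_{G}(v,y_l) \le d_{G}(v,u_i) + A(u_i,y_l) < d_{G}(v,u_i) + A(u_i,u_{i+1}) = d_{G}(v,u_{i+1})$, and likewise $d_{G}(v,u_i) < d_{G}(v,u_{i+1})$. These are $k$ distinct vertices, all distinct from $u_{i+1}$, that are strictly $d_{G}$-closer to $v$ than $u_{i+1}$; but then $u_{i+1}$ cannot be among the $k$ nearest neighbors of $v$, contradicting the fact that $u_{i+1}$ lies on a shortest path to $w \in \mathcal{N}^{d_{G}}_{k,G}(v)$ and is therefore itself in $\mathcal{N}^{d_{G}}_{k,G}(v)$.

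Finally I would assemble the set equality. The structural claim yields $d_{G^{(k)}}(v,w) = d_{G}(v,w)$ for every $w \in \mathcal{N}^{d_{G}}_{k,G}(v)$. For any $z \notin \mathcal{N}^{d_{G}}_{k,G}(v)$, the subgraph inequality together with the defining property of the $k$ nearest neighbors gives $d_{G^{(k)}}(v,z) \ge d_{G}(v,z) > \max_{w\in \mathcal{N}^{d_{G}}_{k,G}(v)} d_{G}(v,w) = \max_{w\in \mathcal{N}^{d_{G}}_{k,G}(v)} d_{G^{(k)}}(v,w)$. Thus in $G^{(k)}$ the $k$ smallest distances from $v$ are attained exactly on $\mathcal{N}^{d_{G}}_{k,G}(v)$, so $\mathcal{N}^{d_{G^{(k)}}}_{k,G^{(k)}}(v) = \mathcal{N}^{d_{G}}_{k,G}(v)$. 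The only delicacy is tie-breaking, but this is neutralized by the standing assumption (stated just before this lemma) that all distances are distinct, which makes every strict inequality above legitimate.
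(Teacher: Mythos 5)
Your proposal is correct and takes essentially the same approach as the paper: both hinge on the structural claim that a $G$-shortest path to any $w \in \mathcal{N}^{d_{G}}_{k,G}(v)$ survives in $G^{(k)}$, proved by contradiction using positivity of the edge weights, together with the fact that every path in $G^{(k)}$ is also a path in $G$. The differences (your prefix-property bookkeeping and direct identification of the set by comparing distances on and off $\mathcal{N}^{d_{G}}_{k,G}(v)$, versus the paper's cardinality argument plus a second contradiction) are cosmetic.
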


Concretely: the path-nearest-neighbors in $G$ are the same as the path-nearest neighbors in $G^{(k)}$, hence one can find $\mathcal{N}^{d_{G}}_{k,G}(v)$ by running a Dijkstra-style algorithm on $G^{(k)}$, instead of $G$. As each vertex in $G^{(k)}$ has a small number of neighbors (precisely $k$), this alleviates the computational burden of the ``for'' loop 6--10 highlighted above. \\

Before proving this lemma, let us explain why it may seem counterintuitive. If $w \in \mathcal{N}^{d_{G}}_{k,G}(v)$ there is a path $\gamma$ from $v$ to $w$ that is short (at least shorter than the shortest paths to all $u \notin  \mathcal{N}^{d_{G}}_{k,G}(v)$). In forming $G^{(k)}$ from $G$, one deletes a lot of edges. Thus it is not clear that $\gamma$ is still a path in $G^{(k)}$ (some of its edges may now be ``missing''). Hence it would seem possible that $w$ is now far away from $v$ in the shortest-path distance in $G^{(k)}$. The lemma asserts that this cannot be the case. \\

\begin{proof}
Since the sets $\mathcal{N}^{d_{G}}_{k,G}(v)$ and  $\mathcal{N}^{d_{G^{(k)}}}_{k,G^{(k)}}(v)$ have the same cardinality ({\em i.e.} $k$), to prove equality it suffices to prove one containment. We shall show that $\mathcal{N}^{d_{G}}_{k,G}(v) \subset \mathcal{N}^{d_{G^{(k)}}}_{k,G^{(k)}}(v)$. Consider any $w\in \mathcal{N}^{d_{G}}_{k,G}(v)$. Let $\tilde{\gamma} = v\to u_1\to \ldots \to u_{m} \to w$ be a shortest path from $v$ to $w$. That is, $L(\tilde{\gamma}) = \min \{ L(\gamma): \gamma \text{ a path from $v$ to $w$} \}$. \\

{\em We claim that $\tilde{\gamma}$ is a path in $G^{(k)}$}. If this not the case, then there is an edge $\{u_{i},u_{i+1}\}$ that is in $\tilde{\gamma}$ but $(u_{i},u_{i+1})$ is not an edge in $G^{(k)}$ (we again adopt the convention that $u_{0} := v$ and $u_{m+1} := w$). By the construction of $G^{(k)}$ this implies that there are $k$ vertices $x_{1},\ldots, x_{k}$ that are closer to $u_{i}$ than $u_{i+1}$. (Note that the sets $\{u_{0},\ldots, u_{i-1}\}$ and $\{x_1,\ldots, x_k\}$ need not be disjoint). But then the paths $\gamma_{j} = v\to u_{1}\to \ldots \to u_{i} \to x_{j}$ in $G$ are all shorter than the path $v\to u_{1}\to \ldots \to u_{i+1}$ and hence shorter than $\tilde{\gamma}$, as all edge weights are assumed positive. It follows that $d_{G}(v,x_j) < d_{G}(v,w)$ for $j =1,\ldots, k$, contradicting the assumption that $w\in \mathcal{N}^{d_{G}}_{k,G}(v)$.\\

{\em Now, we claim that } $w \in \mathcal{N}^{d_{G^{(k)}}}_{k,G^{(k)}}(v)$. If this were not the case, there would exists $k$ other vertices $w_{1},\ldots, w_{k}$ that are closer in the shortest-path distance $d_{G^{(k)}}$ to $v$ than $w$. That is, there would be paths $\gamma_{1},\ldots, \gamma_{k}$ from $v$ to $w_{1},\ldots, w_{k}$ respectively that are shorter than $\gamma$. But every path in $G^{(k)}$ is also a path in $G$, hence $w_{1},\ldots, w_{k}$ are also closer to $v$ than $w$ in the shortest-path distance $d_{G}$. This contradicts the assumption that $w\in \mathcal{N}^{d_{G}}_{k,G}(v)$.\end{proof}


There is a final, minor, inefficiency in Algorithm \ref{alg:Dijkstra} that we can improve upon; $Q$ is initialized to contain all vertices $V$ when it is actually only necessary to initialize it to contain the neighbors of $s$. Combining these three insights we arrive at Algorithm \ref{alg:Dijkstra2}. We call this algorithm Dijkstra-with-pruning as the key idea, expressed in Lemma \ref{lemma:NN_Graph}, is to ``prune'' the neighborhood of each $v\in V$ down to the $k$ nearest neighbors of $v$. Note that we use $\text{\tt DecreaseOrInsert}$ as shorthand for the function that decreases $\text{\tt key}[v]$ to $\text{tempDist}$ if $\text{tempDist} < \text{\tt key}[v]$ and $v \in Q$, inserts $v$ into $Q$ with priority $\text{\tt key}[v] = \text{tempDist}$ if $v\notin Q$ and does nothing if $v \in Q$ but $\text{tempDist} \geq \text{\tt key}[v]$. In fact, this is equivalent to inserting a copy of $v$ into $Q$ with priority $\text{\tt key}[v] = \text{tempDist}$, hence $\text{\tt DecreaseOrInsert}$ has the same computational complexity as $\text{\tt insert}$ (see also \cite{Moscovich2017}). Note that in this implementation the size of $Q$ grows by one on every iteration of the inner for loop, 10--13. 

\begin{algorithm}
\caption{Dijkstra-with-pruning}
\label{alg:Dijkstra2}
\begin{algorithmic}[1]
    \State {\bf Input:} Graph $G$, source vertex $s$. 
    \State {\bf Output:} List $S$ containing $(v,d_{G}(s,v))$ for all $v \in \mathcal{N}^{d_{G}}_{G,k}(s)$.
    \State Compute $\mathcal{N}_{k,G}(s)$
	\State {\bf Initialize:} $Q\gets \text{\tt makeQueue}(\mathcal{N}_{k,G}(s),s)$. Empty list $S$.
    \For{i = 1:k}
    	\State $u \gets $ {\tt extractMin}(Q)
    	\State Append $(u, \text{\tt key}[u])$ to $S$
    	\State Compute $\mathcal{N}_{k,G}(u)$
        \For {$v\in \mathcal{N}_{k,G}(u)$} 
        	\State $\text{tempDist} \gets \text{\tt key}[u] + A(u,v)$
        	\State $\text{\tt DecreaseOrInsert}(v,\text{tempDist})$ 
        \EndFor
    \EndFor
\State {\bf Output:} $S$
\end{algorithmic}
\end{algorithm}

\begin{theorem}
For any $s$ and any $G$ with positive weights, Algorithm \ref{alg:Dijkstra2} is correct. That is, $S$ contains precisely the pairs $(v,d_{G}(s,v))$ for all $v \in \mathcal{N}^{d_{G}}_{G,k}(s)$. 
\label{thm:Dijktra2Correct}
\end{theorem}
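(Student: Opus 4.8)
The plan is to show that Algorithm \ref{alg:Dijkstra2} is nothing more than standard Dijkstra (Algorithm \ref{alg:Dijkstra}) executed on the directed $k$-nearest-neighbor graph $G^{(k)}$ and halted after the first $k$ extractions, and then to read off correctness from Lemmas \ref{lemma:CorrectPoppingOrder} and \ref{lemma:NN_Graph}. Concretely, Algorithm \ref{alg:Dijkstra2} differs from Algorithm \ref{alg:Dijkstra} in three ways: its inner for-loop relaxes only edges into $\mathcal{N}_{k,G}(u)$, which by Definition \ref{def:kNN_Graph} are exactly the out-edges of $u$ in $G^{(k)}$; the queue is seeded with $\mathcal{N}_{k,G}(s)$ rather than all of $V$; and the outer loop runs only $k$ times. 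The first point is by definition ``run Dijkstra on $G^{(k)}$.'' For the second, I would note that the out-neighbors of $s$ in $G^{(k)}$ are precisely $\mathcal{N}_{k,G}(s)$, so seeding $Q$ with $\mathcal{N}_{k,G}(s)$ (all at $\text{\tt key}=+\infty$ except $\text{\tt key}[s]=0$) and thereafter discovering vertices lazily via $\text{\tt DecreaseOrInsert}$ produces the same sequence of $\text{\tt extractMin}$ operations and the same keys as the full initialization of $V$ at $+\infty$: a vertex still at $+\infty$ because it has not yet been relaxed can never be extracted before some finite-keyed vertex, so omitting it from $Q$ initially is harmless (there are at least $k$ reachable vertices since $|\mathcal{N}^{d_G}_{k,G}(s)|=k$).

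Granting this reduction, the two lemmas do most of the work. By Lemma \ref{lemma:CorrectPoppingOrder} applied to $G^{(k)}$, Dijkstra on $G^{(k)}$ extracts vertices in nondecreasing order of $d_{G^{(k)}}(s,\cdot)$ and assigns $\text{\tt key}[u]=d_{G^{(k)}}(s,u)$ at the moment $u$ is extracted; hence the first $k$ extracted vertices are exactly $\mathcal{N}^{d_{G^{(k)}}}_{k,G^{(k)}}(s)$. Lemma \ref{lemma:NN_Graph} gives $\mathcal{N}^{d_{G^{(k)}}}_{k,G^{(k)}}(s)=\mathcal{N}^{d_G}_{k,G}(s)$, so the identities recorded in $S$ are correct; moreover the proof of that lemma exhibits, for each $w\in\mathcal{N}^{d_G}_{k,G}(s)$, a shortest $s$-to-$w$ path in $G$ lying entirely inside $G^{(k)}$, which together with the fact that every $G^{(k)}$-path is a $G$-path forces $d_{G^{(k)}}(s,w)=d_G(s,w)$. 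Thus the distances stored alongside the identities are correct as well.

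The one genuinely delicate point, and the step I expect to be the main obstacle, is justifying that halting after $k$ iterations does not corrupt the keys of these first $k$ vertices: the usual Dijkstra invariant ``$\text{\tt key}[u]=d(s,u)$ at extraction'' relies on the algorithm running long enough that every predecessor of $u$ along a shortest path has already been extracted and relaxed. I would close this by showing that $\mathcal{N}^{d_G}_{k,G}(s)$ is closed under taking shortest-path predecessors in $G^{(k)}$. If $w\in\mathcal{N}^{d_G}_{k,G}(s)$ and $\tilde{\gamma}=s\to u_1\to\cdots\to u_m\to w$ is a shortest path in $G^{(k)}$ (one exists by Lemma \ref{lemma:NN_Graph}), then each prefix $s\to u_1\to\cdots\to u_i$ is itself a shortest path, and since weights are positive and at least one edge remains from $u_i$ to $w$, we have $d_G(s,u_i)\leq d_{G^{(k)}}(s,u_i)<d_{G^{(k)}}(s,w)=d_G(s,w)$. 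So each $u_i$ is strictly $d_G$-closer to $s$ than $w$; as $w$ lies within the top $k$, so does every $u_i$, whence $u_i\in\mathcal{N}^{d_G}_{k,G}(s)$. Consequently every vertex appearing on a shortest path to a top-$k$ vertex is itself among the first $k$ extracted, so when any such $u_i$ is popped its shortest-path predecessor has already relaxed the edge into it, and the Dijkstra invariant survives the early termination. Combining this with the previous paragraph yields that $S$ contains exactly the pairs $(v,d_G(s,v))$ for $v\in\mathcal{N}^{d_G}_{k,G}(s)$, proving the theorem.
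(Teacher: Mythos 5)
Your proof is correct and takes essentially the same route as the paper's: both reduce Algorithm \ref{alg:Dijkstra2} to running Dijkstra's algorithm on $G^{(k)}$ and then combine Lemma \ref{lemma:CorrectPoppingOrder} with Lemma \ref{lemma:NN_Graph}. The additional care you take --- justifying the lazy queue initialization, verifying $d_{G^{(k)}}(s,w)=d_{G}(s,w)$ on $\mathcal{N}^{d_{G}}_{k,G}(s)$, and checking that early termination preserves the Dijkstra invariant --- simply makes explicit details that the paper's terse proof leaves implicit.
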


\begin{proof}
By only using $\mathcal{N}_{k,G}(u)$ in step 8, Algorithm \ref{alg:Dijkstra2} is essentially running Dijkstra's algorithm on $G^{(k)}$. By Lemma \ref{lemma:CorrectPoppingOrder}, the first $k$ elements to be popped off the queue in line 9 are indeed the $k$ closest vertices to $s$ in the graph $G^{(k)}$. That is, $S$ contains $\left(v,d_{G}(s,v)\right)$ for all $v \in \mathcal{N}^{d_{G^{(k)}}}_{k, G^{(k)}}(s)$. By Lemma \ref{lemma:NN_Graph}, $\mathcal{N}^{d_{G^{(k)}}}_{k,G^{(k)}}(s) = \mathcal{N}^{d_{G}}_{k,G}(s)$. \end{proof}

\subsection{Analysis of complexity}
\label{sec:Complexity}
Let us determine the computational complexity of Algorithm \ref{alg:Dijkstra2}. We shall remain agnostic for the moment about the precise implementation of the min-priority queue, and hence shall use the symbols $\mathcal{T}_{in},\mathcal{T}_{dk}$ and $\mathcal{T}_{em}$ to denote the computational complexity of {\tt insert}, $\text{\tt decreaseKey}$ and {\tt extractMin} respectively.  As discussed above, the complexity of $\text{\tt DecreaseOrInsert}$ is also $\mathcal{T}_{in}$. Let $\mathcal{T}_{nn}$ denote the cost of a nearest neighbor query in $G$. Then the cost of a $k$ nearest neighbor query, {\em i.e.} the cost of determining $\mathcal{N}_{k,G}(u)$ as in line 8, is $k\mathcal{T}_{nn}$. \\

Initializing the queue in line 4 requires $k$ insertions, for a cost of $k\mathcal{T}_{in}$. Precisely $k$ {\tt extractMin} operations are performed, for a total cost of $k\mathcal{T}_{em}$. $\text{\tt DecreaseOrInsert}$ is performed $k^{2}$ times, for a cost of $k^{2}\mathcal{T}_{in}$. Finally $k+1$ $k$-Nearest Neighbor queries are performed, for a cost of $(k+1)k\mathcal{T}_{nn}$. This gives a total cost of $k\mathcal{T}_{em} + (k+k^{2})\mathcal{T}_{in} + (k^{2}+k)\mathcal{T}_{nn}$. If the min priority queue is implemented using a Fibonacci heap, {\tt insert} and {\tt decreaseKey} both run in constant time ({\em i.e.} $\mathcal{T}_{in}, \mathcal{T}_{dk} = O(1)$) while for {\tt extractMin} $\mathcal{T}_{em} = O(\log(|Q|))$. Note that $|Q|$ never exceeds $k^{2} + k$ as at most one element is added to $Q$ during every pass through the inner for loop 9--12, which happens $k^{2}$ times. Hence $\mathcal{T}_{em} = O(\log(k))$ and we have a net cost of $O(k\log(k) + k^{2}\mathcal{T}_{nn}))$ where $\mathcal{T}_{nn}$ depends on the specifics of $G$. \\

Let us return to the case of primary interest in this paper; where $G$ is the complete graph on $n$ vertices and $A_{ij} = \|\bfx_i - \bfx_j\|^{p}$. By remark \ref{remark:Relating_graph_dist_to_p_dist} to compute $\mathcal{N}^{(p)}_{k,\mathcal{X}}(\bfx_i)$ it will suffice to compute $\mathcal{N}_{k,G}^{d_{G}}(v_i)$. Here, $\mathcal{T}_{nn}$ is equal to the cost of a Euclidean nearest neighbors query on $\mathcal{X}$, namely $\mathcal{T}_{Enn}$. Because $\mathcal{T}_{Enn} \gg \log(k)/k$ we get that for this case Algorithm \ref{alg:Dijkstra2} runs in $O(k^{2}\mathcal{T}_{Enn})$, as advertised in the introduction. For a totally general data set, $\mathcal{T}_{Enn} = O(Dn)$. However if $\mathcal{X}$ is intrinsically low-dimensional, which we are assuming, it is possible to speed this up. For example if $\mathcal{X}$ is stored in an efficient data structure such as a k-d tree \cite{Bentley1975} or a cover tree \cite{Beygelzimer2006} then $\mathcal{T}_{Enn} = O(\log(n))$. Note that initializing a Cover tree requires $O(c^{d_{\max}}Dn\log(n))$ time, where $c$ is a fixed constant \cite{Beygelzimer2006}. Hence finding $\mathcal{N}^{(p)}_{k,\mathcal{X}}(\bfx_i)$ for all $\bfx_i\in \mathcal{X}$ requires $O(k^{2}n\log(n) + c^{d_{\max}}Dn\log(n))$.

\subsection{Extension to Longest-Leg Path Distance}
A small modification to Algorithm \ref{alg:Dijkstra2} allows one to compute the $k$ nearest neighbors in the longest-leg-path distance, simply change the `$+$' in line 10 to a `max'. This guarantees that $\text{tempDist}$ represents the longest-leg length of the path $s\to\ldots \to u \to v$. For completeness, we present this algorithm below as Algorithm \ref{alg:Dijkstra_for_LLPD}. The proof of correctness is analogous to Theorem \ref{thm:Dijktra2Correct}, and we leave it to the interested reader. 

\begin{algorithm}
\caption{Dijkstra-with-pruning for LLPD}
\label{alg:Dijkstra_for_LLPD}
\begin{algorithmic}[1]
    \State {\bf Input:} Graph $G$, source vertex $s$. 
    \State {\bf Output:} List $S$ containing $(v,d_{G}(s,v)$ for all $v \in \mathcal{N}^{d^{\infty}_{G}}_{G,k}(v)$.
    \State Compute $\mathcal{N}_{k,G}(s)$ 
	\State {\bf Initialize:} $Q\gets \text{\tt makeQueue}(\mathcal{N}_{k,G}(s),s)$. Empty list $S$.
    \For{i = 1:k}
    	\State $u \gets $ {\tt extractMin}(Q)
        \State Append $(u,\text{\tt key}[u])$ to $S$
        \State Compute $\mathcal{N}_{k,G}(u)$
        \For {$v\in \mathcal{N}_{G^{(k)}}(u)$}
        	\State $\text{tempDist} \gets \max\left\{\text{\tt key}[u],A(u,v)\right\}$
        	\State $\text{\tt DecreaseOrInsert}(v,\text{tempDist})$ 
        \EndFor
    \EndFor
\State {\bf Output:} $S$
\end{algorithmic}
\end{algorithm}

\begin{remark}
By the same arguments as in \S \ref{sec:Complexity}, for any $\mathcal{X} = \{\bfx_1,\ldots,\bfx_n\} \subset\mathbb{R}^{D}$ one may find $\mathcal{N}^{(\infty)}_{k,\mathcal{X}}(\bfx_i)$ by finding $\mathcal{N}^{(\infty)}_{k,G}(v_i)$ in the complete graph $G$ with edge weights $A_{ij} = \|\bfx_i - \bfx_j\|$. The complexity of Algorithm \ref{alg:Dijkstra_for_LLPD} is the same as Algorithm \ref{alg:Dijkstra2}, hence one may find $\mathcal{N}^{(\infty)}_{k,\mathcal{X}}(\bfx_i)$ for all $\bfx_i\in\mathcal{X}$ in $O(k^{2}n\log(n) + c^{d_{\max}}Dn\log(n))$.
\end{remark}

\subsection{Comparison with Results of \cite{Chu2017}}
\label{sec:Compare_with_Chu}
Recall the following definition:
\begin{definition}
Let $G = (V,E,A)$ be a weighted graph. A sub-graph $H\subset G$ is called a $1$-spanner of $G$ if $H$ has the same vertex set as $G$ and, for all $u,v\in V$ we have that $d_{G}(u,v) = d_{H}(u,v)$.
\end{definition}

When preparing this manuscript for publication, the authors became aware of the following result of Chu, Miller and Sheehy:
\begin{theorem}
Let $\mathcal{M}\subset \mathbb{R}^{D}$ be a compact Riemannian manifold of dimension $d$, and let $\mathcal{X} = \{\bfx_1,\ldots,\bfx_n\}$ be sampled from $\mathcal{M}$ according to a Lipschitz continuous probability distribution $\mu$ satisfying $\mu_{\min} > 0$. Let $G$ be the complete graph on $V = \mathcal{X}$ with $A_{ij} = \|\bfx_i - \bfx_j\|^{p}$. If $p\geq 2$ and $k = O(2^{d}\log(n))$ then with probability $1 - o(1)$ the Euclidean $k$ nearest neighbors graph $G^{(k)}$ is a $1$-spanner of $G$.
\label{thm:ChuSheehyMiller}
\end{theorem}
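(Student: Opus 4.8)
The plan is to prove $d_G=d_{G^{(k)}}$ by splitting it into a free inequality and a single reducible estimate. Since $G^{(k)}$ is a subgraph of $G$ carrying the same edge weights, every path in $G^{(k)}$ is a path in $G$, so $d_G(u,v)\le d_{G^{(k)}}(u,v)$ holds automatically and the whole content is the reverse inequality. For that it suffices to establish the single-edge bound $d_{G^{(k)}}(u,v)\le\|u-v\|^p$ for \emph{every} pair $u,v\in\mathcal X$: granting this, if $u=z_0,z_1,\dots,z_\ell=v$ is any $G$-path realising $d_G(u,v)$, then $d_{G^{(k)}}(u,v)\le\sum_i d_{G^{(k)}}(z_i,z_{i+1})\le\sum_i\|z_i-z_{i+1}\|^p=d_G(u,v)$, which is the same concatenation idea underlying Lemma~\ref{lemma:NN_Graph}.

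First I would prove the single-edge bound by strong induction on the finitely many values of $\|u-v\|$. If $v\in\mathcal N_{k,G}(u)$ then $(u,v)$ is an edge of $G^{(k)}$ and the bound is immediate. If $v\notin\mathcal N_{k,G}(u)$, the crux is to produce an intermediate sample $w\neq u,v$ lying in the open ball $B$ whose diameter is the segment $\overline{uv}$ (the Thales ball of the pair). For such $w$ the Pythagorean inequality gives $\|u-w\|^2+\|w-v\|^2\le\|u-v\|^2$, and interiority forces both legs to be strictly shorter than $\|u-v\|$. This is exactly where $p\ge 2$ enters: applying the monotonicity of $\ell_p$-norms from \eqref{eq:p_q_bound} to the two-vector $(\|u-w\|,\|w-v\|)$ yields $\|u-w\|^p+\|w-v\|^p\le\bigl(\|u-w\|^2+\|w-v\|^2\bigr)^{p/2}\le\|u-v\|^p$. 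Invoking the inductive hypothesis on the strictly shorter pairs $(u,w)$ and $(w,v)$ together with the triangle inequality for $d_{G^{(k)}}$ then gives $d_{G^{(k)}}(u,v)\le\|u-w\|^p+\|w-v\|^p\le\|u-v\|^p$, closing the induction. Notice the edge weights are ambient Euclidean chords, so both the Pythagorean identity and this $\ell_p$ comparison are exact and cost nothing; curvature of $\mathcal M$ never touches the geometry here.

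It remains to show that with probability $1-o(1)$ every \emph{relevant} Thales ball is occupied, and this is where the factor $2^d\log n$ is born. For a non-edge pair, $\|u-v\|$ is at least the distance $r_k(u)$ from $u$ to its $k$-th Euclidean neighbour, so the Thales ball has radius at least $r_k(u)/2$ and hence $d$-dimensional volume smaller than $B(u,r_k(u))$ by a factor $2^{-d}$. Because $B(u,r_k(u))$ contains $k$ samples, uniform density bounds (using $\mu_{\min}>0$ below and $\mu$ bounded above by continuity and compactness) give $r_k(u)^d\gtrsim k/n$; intersecting the Thales ball with $\mathcal M$ and using $\mu_{\min}>0$ then makes its $\mu$-measure at least $\gtrsim\mu_{\min}(r_k(u)/2)^d$, so its expected sample count is $\gtrsim 2^{-d}k=\Theta(\log n)$ precisely when $k=\Theta(2^d\log n)$ — the $2^{-d}$ volume loss of the Thales ball is exactly cancelled by the $2^d$ in $k$. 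A Chernoff bound makes each such ball nonempty except with probability $n^{-c}$ for a constant $c>2$ (tunable through the constant in $k$), and a union bound over the $O(n^2)$ pairs produces the event $\mathcal E$, of probability $1-o(1)$, on which the induction runs simultaneously for all pairs.

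I expect the main obstacle to be exactly this last uniform estimate rather than the deterministic core. Guaranteeing occupancy of every Thales ball down to the smallest relevant scale $r_k(u)\sim(2^d\log n/n)^{1/d}$ requires controlling the local sample density at all centres and across the whole random range of $k$-th nearest-neighbour radii at once; lower-bounding every $r_k(u)$ uniformly is where the Lipschitz regularity of $\mu$, the bound $\mu_{\min}>0$, and the bounded geometry of $\mathcal M$ must be combined, and where the constant hidden in $k=O(2^d\log n)$ gets pinned down. The residual care lies in converting ambient Euclidean ball volumes into $\mu$-mass on the curved $\mathcal M$, which I would handle by restricting to scales below the reach of $\mathcal M$ so that $\mathcal M$ is uniformly graphical over its tangent plane and the Euclidean Thales ball carries volume comparable to a flat $d$-dimensional ball.
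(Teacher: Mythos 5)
First, note that the paper does not actually prove this statement: its ``proof'' is a citation to Theorem 1.6 and Corollary 6.1.1 of \cite{Chu2017}, so your attempt must be judged on its own merits. Your deterministic core is sound and is indeed the right geometric mechanism for $p\geq 2$: if $w$ lies in the open ball with diameter $\overline{uv}$ then $\|u-w\|^{2}+\|w-v\|^{2}<\|u-v\|^{2}$, the norm comparison \eqref{eq:p_q_bound} upgrades this to $\|u-w\|^{p}+\|w-v\|^{p}<\|u-v\|^{p}$, and the reduction of the spanner property to the single-edge bound $d_{G^{(k)}}(u,v)\leq\|u-v\|^{p}$, via concatenation, is valid.

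The genuine gap is in the probabilistic step: the claim that \emph{every} non-$k$-NN pair has an occupied Thales ball is false for a general compact manifold, because your volume estimate $\mu\bigl(B\cap\mathcal{M}\bigr)\gtrsim\mu_{\min}(r_{k}(u)/2)^{d}$ is only valid when $\|u-v\|$ is below the reach (curvature) scale of $\mathcal{M}$, while non-$k$-NN pairs occur at \emph{all} scales. Concretely, take $\mathcal{M}$ to be a horseshoe-shaped manifold whose two end caps face each other across a gap: for $u,v$ near the two tips, the tangent spaces are orthogonal to $v-u$ and the caps curve away from the segment, so the closed Thales ball meets $\mathcal{M}$ only in $\{u,v\}$; its $\mu$-mass is zero, the Chernoff bound gives nothing, and this failure is deterministic and persists for a positive-measure set of pairs (a milder version: for exactly antipodal points on a sphere the open Thales ball misses $\mathcal{M}$ entirely). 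For such pairs your induction simply gets stuck, even though the single-edge bound they require is still true --- but for a completely different reason: the shortest $G$-path goes \emph{around} the manifold through many short hops, with cost $O\bigl(\mathrm{diam}(\mathcal{M})\,\epsilon_{n}^{p-1}\bigr)\ll\|u-v\|^{p}$. Repairing the argument therefore needs a second probabilistic ingredient you do not have: a covering/net lemma (w.h.p.\ every ball of radius $\epsilon_{n}\sim(\log n/n)^{1/d}$ centered on $\mathcal{M}$ contains a sample, using connectedness of $\mathcal{M}$), which shows that no shortest path in $G$ ever uses an edge longer than the reach scale; only then can the Thales-ball induction be run, restricted to pairs below that scale where your volume comparison is legitimate. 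This large-scale/small-scale split is not cosmetic --- it is exactly the mechanism behind the paper's own remark that the theorem fails outright for the disconnected multi-manifold model of \S\ref{sec:DataModel}, where no around-path exists to rescue the far pairs.
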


\begin{proof}
See Theorem 1.6 and Corollary 6.1.1 of \cite{Chu2017}.
\end{proof}

This theorem give another way to deduce Theorem \ref{thm:Dijktra2Correct}, as the statement that $G^{(k)}$ is a $1$-spanner of $G$ implies Lemma \ref{lemma:NN_Graph}. However Lemma \ref{lemma:NN_Graph} holds in more generality. In particular:
\begin{enumerate}
    \item It is not conditional ({\em ie} holds with probability $1$).
    \item It places no restriction on $k$.
    \item It holds for any data model. Note that Theorem \ref{thm:ChuSheehyMiller} does not hold for the Data Model of \S \ref{sec:DataModel}, where multiple manifolds are under consideration.
\end{enumerate}

\section{Numerical Experiments}
\label{sec:ExperimentalResults}

In this section we verify that using a $p$-wspm in lieu of the Euclidean distance does indeed result in more accurate clustering results, at a modest increase in run time. Specifically, we consider eight datasets, and compare the accuracy of spectral clustering using $k$-NN graphs constructed using $p$-wspm's for $p=2,10$ and $\infty$ and using the Euclidean metric. As a baseline, we also consider spectral clustering with a full similarity matrix based on the Euclidean metric. For notational reasons it is convenient to denote the Euclidean metric as $d^{(1)}_{\mathcal{X}}$ (which is correct by Theorem \ref{lemma:d_(1)}), in which case the four metrics under consideration are $d^{(1)}_{\mathcal{X}}, d^{(2)}_{\mathcal{X}}, d^{(10)}_{\mathcal{X}}$ and $d^{(\infty)}_{\mathcal{X}}$. All numerical experiments described in this section were implemented in {\tt MATLAB} on a mid 2012 Mac Pro with 2 2.4 GHz 6-Core Intel Xeon processors and 32 GB of RAM. All  code used is available at \url{danielmckenzie.github.io}.

\subsection{The Data Sets}

\paragraph{\bf Three Lines.}
We draw data uniformly from three horizontal line segments of length $5$ in the x-y plane, namely $y=0$, $y = 1$ and $y=2$. We draw $500$ points from each line to create three clusters. We then embed the data into $\mathbb{R}^{50}$ by appending zeros to the coordinates, and add i.i.d. random Gaussian noise to each coordinate (with standard deviation $\sigma = 0.14$).  

\paragraph{\bf Three Moons.} This data set is as described in \cite{Yin2018} and elsewhere. It has three clusters, generated by sampling points uniformly at random from the upper semi-circle of radius $1$ centered at $(0,0)$, the lower semi-circle of radius $1.5$ centered at $(1.5,0.4)$ and the upper semi-circle of radius $1$ centered at $(3,0)$. As for the Three Lines data set, we draw $500$ data points from each semi-circle, embed the data into $\mathbb{R}^{50}$ by appending zeros, and then add Gaussian noise to each coordinate with standard deviation $\sigma = 0.14$. 

\paragraph{\bf Three Circles.} Here we draw data points uniformly from three concentric circles, of radii $1,2.25$ and $3.5$. We draw $222$ points from the smallest circle, $500$ points from the middle circle and $778$ points from the largest circle (the numbers are chosen so that the total number of points is $1500$). As before, we embed this data into $\mathbb{R}^{50}$ and add i.i.d Gaussian noise to each component, this time with standard deviation of $\sigma = 0.14$. \\

Two dimensional projections of these data sets are shown in Figure \ref{fig:ShortestPaths_Synthetic_Data}. We also considered five real data sets. We focused on image data sets that are suspected to satisfy the manifold hypothesis, namely images of faces and objects taken from different angles and handwritten digits. We obtained most of our datasets from the UCI Machine Learning Repository \cite{Dua2019}. 

\paragraph{\bf DrivFace} consists of $80\times 80$ greyscale images of four drivers, from a variety of angles. There are $606$ images in total, and the largest class contains $179$ images while the smallest class contains $90$ images.\footnote{available at \url{https://archive.ics.uci.edu/ml/datasets/DrivFace} and see also \cite{Diaz2016}}

\paragraph{\bf COIL-20} The Columbia Object Image Library (COIL) contains greyscale images of a variety of objects. There are $72$ images of each object, all from different angles. The COIL-20 dataset contains all $72$ images for $20$ objects, for a total of $1440$ images.\footnote{available at \url{http://www.cs.columbia.edu/CAVE/software/softlib/coil-20.php} and see also \cite{Nene1996}}.

\paragraph{\bf OptDigits} This data set consists of downsampled, $8\times 8$ greyscale images of handwritten digits $0-9$ and is available at . There are $150$ images of zero, and approximately $550$ images each of the remaining digits, for a total of $5620$ images.\footnote{available at \url{ https://archive.ics.uci.edu/ml/datasets/optical+recognition+of+handwritten+digits}} 

\paragraph{\bf USPS}
This data set consists of $16\times 16$, greyscale images of the handwritten digits 0--9. There are $1100$ images per class for a total of $11\ 000$ images.\footnote{available at: \url{https://cs.nyu.edu/~roweis/data.html}} 

\paragraph{\bf MNIST} This data set consists of $28\times 28$ greyscale images of the handwritten digits 0--9. We combined the test and training sets to get a total of $70 \ 000$ images.\footnote{available at \url{http://yann.lecun.com/exdb/mnist/}}

\begin{figure}
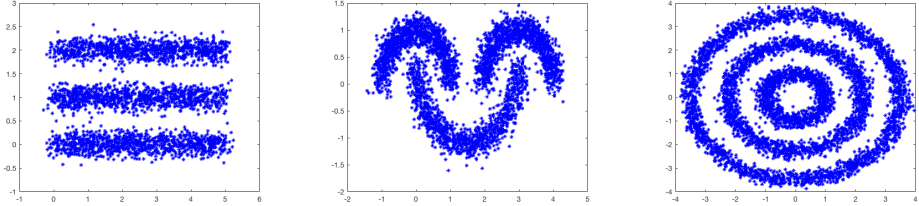

\centering
\minipage{.32\textwidth}
  \includegraphics[width =\linewidth]{Three_Lines_in_2D.png}
\endminipage\hfill
\minipage{.32\textwidth}
\includegraphics[width =\linewidth]{Three_Moons_in_2D.png}
\endminipage\hfill
\minipage{.32\textwidth}%
  \includegraphics[width = \linewidth]{Three_Circles_in_2D.png}
\endminipage
\caption{All three synthetic data sets, projected into $\mathbb{R}^{2}$. From left to right: Three Lines, Three Moons and Three Circles.}
\label{fig:ShortestPaths_Synthetic_Data}
\end{figure}

\subsection{Preprocessing the Data}
Zelnik-Manor and Perona \cite{Zelnik2005} propose the following, locally scaled kernel for spectral clustering:
$$
A_{ij} = \exp\left( - d^{2}(\bfx_i,\bfx_j)/\sigma_i\sigma_j\right)
$$
where $\sigma_i:= d(\bfx_i,\bfx_{[r,i]})$ and $\bfx_{[r,i]}$ denotes the $r$-th closest point in $\mathcal{X}$ to $\bfx_i$. It is shown in \cite{Zelnik2005} and elsewhere that this kernel tends to outperform the unscaled Gaussian kernel (and indeed most other choices of kernel function), hence we adopt this for our experiments. We construct the full Euclidean similarity matrix as:
$$
A^{(f,1)}_{ij} = \exp\left( - \|\bfx_i - \bfx_j\|^{2}/\sigma_i\sigma_j\right)
$$
with $r = 10$. We also construct weighted $k$-NN similarity matrices for all four metrics using the same kernel and described in detail as Algorithm \ref{alg:Construct_Sim_Mat}. All the real data sets were initially vectorized, so the $80\times 80$ {\tt DrivFace} data set becomes vectors in $\mathbb{R}^{6400}$ and so on. Note that this procedure for constructing weighted $k$-NN similarity matrices was inspired by \cite{Jacobs2018}.

\begin{algorithm}
\caption{Construct Similarity Matrix}
\label{alg:Construct_Sim_Mat}
\begin{algorithmic}[1]
    \State {\bf Input:} Parameters $r,k,p$. Data set $\mathcal{X}\subset\mathbb{R}^{D}$. 
    \State {\bf Output:} weighted $k$-NN similarity matrix $A^{(p)}$.
    \State $n\gets |\mathcal{X}|$
    \State For $i=1,\ldots, n$ compute $\mathcal{N}_{k,\mathcal{X}}^{(p)}(\bfx_i)$ using Algorithm \ref{alg:Dijkstra2} or \ref{alg:Dijkstra_for_LLPD}.
    \State Compute $\sigma^{(p)}_i:= d^{(p)}_{\mathcal{X}}(\bfx_i,\bfx_{[r,i]})$, where $\bfx_{[r,i]}$ denotes the $r$-th closest point in $\mathcal{X}$ to $\bfx_i$ with respect to the distance $d^{(p)}_{\mathcal{X}}$.
    \State Define $\tilde{A}^{(p)}$ as: $\tilde{A}^{(p)}_{ij} = \left\{\begin{array}{cc}\exp\left(-d^{(p)}_{\mathcal{X}}(\bfx_i,\bfx_j)^{2}/\sigma_i\sigma_j\right) & \text{ if } \bfx_j \in \mathcal{N}^{(p)}_{k,\mathcal{X}}(\bfx_i)\\ 0 & \text{otherwise} \end{array}\right.$
    \State Symmetrize: $A^{(p)}_{ij} \gets \max\left\{\tilde{A}^{(p)}_{ij},\tilde{A}^{(p)}_{ji}\right\}$
\State {\bf Output:} $A^{(p)}_{ij}$
\end{algorithmic}
\end{algorithm}





\begin{remark}
We certainly make no claim that the choice of parameters $r=10$ and $k=15$ is optimal, and indeed playing around with them can result in slightly better (or worse) results on certain data sets. However, we observed that changing the parameters had little qualitative effect on the results, and in particular on the ordering of the similarity matrices from least to most accurate (see Table \ref{table:Synthetic_Results}). As a sanity check, we also experimented with an unweighted k-NN graph, whereby for $p = 1,2,10$ and $\infty$ we define $A^{(\text{uw},p)}$ as:
$$
A^{(\text{uw},p)}_{ij} = \left\{\begin{array}{cc} 1 & \text{ if } \bfx_j \in \mathcal{N}^{(p)}_{k,\mathcal{X}}(\bfx_i) \text{ or } \bfx_i \in\mathcal{N}^{(p)}_{k,\mathcal{X}}(\bfx_j) \\ 0 & \text{otherwise} \end{array}\right.
$$
Again, the relative ordering of the results changed little, although the accuracy was several points lower for all four metrics. All code to reproduce the experiments is available on the second author's website, and we invite the curious reader to experiment further for themselves.
\end{remark}

\subsection{Experimental Results}
For each similarity matrix we perform normalized spectral clustering as described in Ng, Jordan and Weiss \cite{Ng2002} using freely available code  \footnote{See: \url{https://www.mathworks.com/matlabcentral/fileexchange/34412-fast-and-efficient-spectral-clustering}}. We compute $k$ nearest neighbors in the $p$-wspm's using Algorithms \ref{alg:Dijkstra2} and \ref{alg:Dijkstra_for_LLPD}, with the data points stored in a k-d tree. We calculate the accuracy by comparing the output of spectral clustering to the ground truth and recorded the running time. For the randomly generated data sets ({\em ie} Three Lines, Three Circles and Three Moons) we ran fifty independent trials and report the mean and standard deviation. For the deterministic data sets ({\em ie} all the others) we ran ten independent trials and report the mean. The results are  displayed in Tables 1 and 2. We do not attempt clustering with a full similarity matrix, $A^{(f,1)}$ on {\tt MNIST} as the resulting matrix is too large to hold in memory.\\

From these results, we may draw several broad conclusions. Observe that for smaller or low dimensional data sets, constructing the full similarity matrix $A^{(f,1)}$ is fastest. This is due to the overhead cost of constructing the k-d tree incurred by the nearest neighbors methods. This situation is reversed in higher dimensions or for larger data sets. The gap between the run-times for $A^{(1)}$ and $A^{(2)},A^{(10)}$ and for $A^{(\infty)}$ is attributable to the cost of running Algorithm \ref{alg:Dijkstra2} or \ref{alg:Dijkstra_for_LLPD}. Although this gap is large, relatively, for smaller data sets it becomes less relevant for larger data sets. In fact, the entire process of spectral clustering is {\em faster} with a p-wspm for the {\tt MNIST} data set. We attribute this to the fact that $A^{(2)},A^{(10)}$ and $A^{(\infty)}$ are more ``block-diagonal'' than $A^{(1)}$, hence finding their leading eigenvectors takes less time. This more than offsets the extra time required to construct them. With regards to accuracy, observe that for every data set clustering using the Euclidean metric (either the full version or the $k$-NN version) is less accurate then using a $p$-wspm. The catch here is that {\em which} $p$-wspm varies. As a general rule, $d^{(\infty)}_{\mathcal{X}}$ appears best for elongated data ({\em ie} the three lines) or when there is a large gap between the intrinsic dimension of the data and the ambient dimension ({\em ie} {\tt USPS} or {\tt MNIST}). On the other hand, $d^{(2)}_{\mathcal{X}}$ appears best for more globular data ({\em ie} the three moons) or when the gap between intrinsic and ambient dimension is less pronounced ({\em ie} {\tt OptDigits}). In all cases, $d^{(10)}_{\mathcal{X}}$ seems to be a good compromise between these two extremes.\\

Finally, note that the standard deviation of the accuracy is much higher for $d^{(10)}_{\mathcal{X}}$ (and $d^{(\infty)}_{\mathcal{X}}$) than it is for $d^{(1)}_{\mathcal{X}}$ or $d^{(2)}_{\mathcal{X}}$. This is in agreement with Theorem \ref{theorem:ShortestPathsBound_eps1}, where it is shown that the bound on $\epsilon_1$ holds with probability inversely proportional to $p$.

\begin{table}
    \centering
    \begin{tabular}{|c|ccccc|}
        \hline
                  & $A^{(f,1)}$ & $A^{(1)}$ & $A^{(2)}$ & $A^{(10)}$ & $A^{(\infty)}$ \\
        \hline
          3 Lines & $66.11\pm 0.94\%$ & $66.35 \pm 3.73\%$ & $66.87 \pm 3.37\%$ & $95.38\pm 9.22\%$ & $\mathbf{95.38 \pm 9.1\%}$ \\
         \hline
         3 Moons  & $85.90 \pm 1.13\%$ & $94.40 \pm 1.48\%$ & $94.40 \pm 1.48\%$ & $\mathbf{96.20 \pm 1.76\%}$ & $94.35 \pm 3.34\%$ \\
         \hline
         3 Circles & $51.87 \pm 0.00\%$ &$51.93 \pm 0.32\%$ & $51.94 \pm 0.36\%$ & $71.22 \pm 9.50\%$ & $\mathbf{73.61 \pm 10.47\%}$ \\
         \hline
         \hline
         {\tt DrivFace} & $78.88\%$ & $71.62\%$ & $71.62\%$ & $74.71\%$ & $\mathbf{85.38\%}$ \\
         \hline
         {\tt COIL-20} & $63.24\%$ & $75.28\%$ & $\mathbf{78.61\%}$ & $77.45\%$ & $60.92\%$ \\
         \hline
        {\tt OptDigits} & $77.73\%$ & $91.49\%$ & $\mathbf{91.54\%}$ & $88.39\%$ & $83.17\%$ \\
         \hline
          {\tt USPS} & $48.65\%$ & $65.05\%$ & $65.02\%$ & $76.20\%$ & $\mathbf{77.92\%}$ \\
          \hline
          {\tt MNIST} & - & $76.11\%$ & $75.63\%$ & $84.54\%$ & $\mathbf{86.77\%}$ \\
          \hline
    \end{tabular}
    \caption{Classification accuracy of spectral clustering. Note that $A^{(1)}$ represents using the Euclidean metric.}
    \label{table:Synthetic_Results}
\end{table}

\begin{table}
    \centering
    \begin{tabular}{|c|ccccc|}
        \hline
                  & $A^{(f,1)}$ & $A^{(1)}$ & $A^{(2)}$ & $A^{(10)}$ & $A^{(\infty)}$ \\
        \hline
        3 Lines   & $0.32$ & $0.16$ & $1.20$ & $1.22$ & $1.22$ \\
         \hline 
        3 Moons   & $0.33$ & $0.17$ & $1.31$ & $1.30$ & $1.36$ \\
          \hline
        3 Circles & $0.35$ & $0.16$ & $1.00$ & $1.06$ & $1.07$ \\
          \hline
          \hline
          {\tt DrivFace} & $0.37$ & $1.24$ & $1.55$& $1.64$ & $1.64$ \\
          \hline
          {\tt COIL-20} & $0.57$ & $0.72$ & $1.57$ & $1.82$ & $1.78$ \\
          \hline
        {\tt OptDigits} & $5.40$ & $1.41$ & $5.28$ & $5.58$ & $5.67$ \\
         \hline 
          {\tt USPS} & $27.40$ & $17.12$ & $26.75$ & $22.78$ & $23.79$ \\
          \hline
          {\tt MNIST} & - & $2060.23$ & $2031.38$ & $1554.15$ & $1613.41$ \\
          \hline 
    \end{tabular}
    \caption{Run time of spectral clustering, in seconds. Note that this includes the time rquired to construct the similarity matrix. $A^{(1)}$ represents using the Euclidean metric.}
    \label{table:Synthetic_Times}
\end{table}

\subsection{Varying the Power Weighting}
From the analysis of \S \ref{sec:Theory} it would appear that taking $p$ to be as large as possible always results in the best clustering results. However, this is true only in an asymptotic sense, and indeed the results contained in Table 1 indicate that for finite sample sizes this is not always the case. In Figure \ref{fig:Varying_p} we show the results of varying $p$ from $1$ to $20$ for the three lines data set, this time with $300$ points drawn from each cluster. We do this for three values of the ambient dimension, $D=10,50$ and $100$. As is clear, the optimal value of $p$ depends on the dimension\footnote{The observant reader will notice that we are only varying the ambient dimension, which according to the analysis of \S \ref{sec:Theory} should have no effect. Recall however, that we are adding Gaussian noise of the ambient dimension, which `thickens' the data manifolds and makes their intrinsic dimension weakly dependent on $D$.}. In particular, observe that an intermediate value of $p$, say $p=14$, is optimal when the ambient dimension is $50$ but that a smaller power weighting ($p=2$) is more appropriate when the ambient dimension is $10$. When the ambient dimension is $100$, no power weighting performs well, which is likely because for such a large value of $D$, and such a small amount of data per cluster, the noise drowns out any cluster structure.

\begin{figure}
 \centering
   \includegraphics[width =0.5\linewidth]{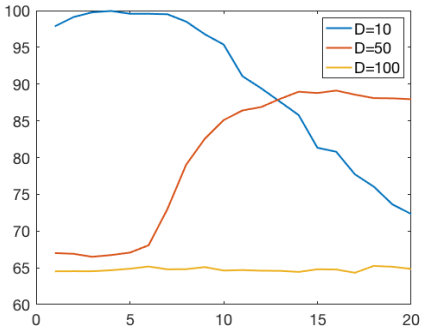}
   \caption{Varying $p$ and recording the accuracy of spectral clustering on the Three Lines data set, for three different values of the ambient dimension.}
   \label{fig:Varying_p}
\end{figure}

\section{Conclusions and Future Directions}
In this paper we argued that $p$-wspm's are well-suited to the problem of clustering high dimensional data when the data is sampled from a disjoint union of low dimensional manifolds. We showed that spectral clustering with a $p$-wspm outperforms spectral clustering with Euclidean distance, and using Algorithm \ref{alg:Dijkstra2} the increase in computational burden is negligible. From the results of \S\ref{sec:ExperimentalResults} it is clear that the geometry of the data manifolds influences which power weighting is optimal, and it would be of interest to analyze this further. Also of interest would be to extend our work to more general data models, for example those that only require the sampling distributions $\mu_a$ to be supported ``near'' $\mathcal{M}_a$, or those that allow for intersections between the data manifolds.

\section{Acknowledgements}
The first author gratefully acknowledges the support of the Department of Mathematics, The University of Georgia, where the first author was a graduate student while this work was completed. The second author thanks the Department of Mathematics, The University of Michigan for their support. Both authors thank the anonymous reviewer for many useful suggestions.

\providecommand{\href}[2]{#2}
\providecommand{\arxiv}[1]{\href{http://arxiv.org/abs/#1}{arXiv:#1}}
\providecommand{\url}[1]{\texttt{#1}}
\providecommand{\urlprefix}{URL }


\begin{thebibliography}{10}

\bibitem{Alamgir2012}
\newblock M.~Alamgir and U.~Von~Luxburg,
\newblock Shortest path distance in random k-nearest neighbor graphs,
\newblock \emph{arXiv preprint arXiv:1206.6381}.

\bibitem{Aldroubi2019}
\newblock A.~Aldroubi, K.~Hamm, A.~Koku and A.~Sekmen,
\newblock Cur decompositions, similarity matrices, and subspace clustering,
\newblock \emph{Front. Appl. Math. Stat. 4: 65. doi: 10.3389/fams}.

\bibitem{Arias2011}
\newblock E.~Arias-Castro,
\newblock Clustering based on pairwise distances when the data is of mixed
  dimensions,
\newblock \emph{IEEE Transactions on Information Theory}, \textbf{57} (2011),
  1692--1706.

\bibitem{Basri2003}
\newblock R.~Basri and D.~Jacobs,
\newblock Lambertian reflectance and linear subspaces,
\newblock \emph{IEEE Transactions on Pattern Analysis \& Machine Intelligence},
  218--233.

\bibitem{Bentley1975}
\newblock J.~L. Bentley,
\newblock Multidimensional binary search trees used for associative searching,
\newblock \emph{Communications of the ACM}, \textbf{18} (1975), 509--517.

\bibitem{Beygelzimer2006}
\newblock A.~Beygelzimer, S.~Kakade and J.~Langford,
\newblock Cover trees for nearest neighbor,
\newblock in \emph{Proceedings of the 23rd international conference on Machine
  learning},
\newblock ACM, 2006,
\newblock 97--104.

\bibitem{Bijral2011}
\newblock A.~Bijral, N.~Ratliff and N.~Srebro,
\newblock Semi-supervised learning with density based distances,
\newblock in \emph{Proceedings of the Twenty-Seventh Conference on Uncertainty
  in Artificial Intelligence},
\newblock AUAI Press, 2011,
\newblock 43--50.

\bibitem{Chang2008}
\newblock H.~Chang and D.-Y. Yeung,
\newblock Robust path-based spectral clustering,
\newblock \emph{Pattern Recognition}, \textbf{41} (2008), 191--203.

\bibitem{Chu2017}
\newblock T.~Chu, G.~Miller and D.~Sheehy,
\newblock Exploration of a graph-based density sensitive metric,
\newblock \emph{arXiv preprint arXiv:1709.07797}.

\bibitem{Coifman2006}
\newblock R.~Coifman and S.~Lafon,
\newblock Diffusion maps,
\newblock \emph{Applied and computational harmonic analysis}, \textbf{21}
  (2006), 5--30.

\bibitem{Cormen2009}
\newblock T.~Cormen, C.~Leiserson, R.~Rivest and C.~Stein,
\newblock \emph{Introduction to algorithms},
\newblock MIT press, 2009.

\bibitem{Costeira1998}
\newblock J.~Costeira and T.~Kanade,
\newblock A multibody factorization method for independently moving objects,
\newblock \emph{International Journal of Computer Vision}, \textbf{29} (1998),
  159--179.

\bibitem{Diaz2016}
\newblock K.~Diaz-Chito, A.~Hern{\'a}ndez-Sabat{\'e} and A.~L{\'o}pez,
\newblock A reduced feature set for driver head pose estimation,
\newblock \emph{Applied Soft Computing}, \textbf{45} (2016), 98--107.

\bibitem{Dua2019}
\newblock D.~Dua and C.~Graff,
\newblock {UCI} machine learning repository, 2017,
\newblock \urlprefix\url{http://archive.ics.uci.edu/ml}.

\bibitem{Fefferman2016}
\newblock C.~Fefferman, S.~Mitter and H.~Narayanan,
\newblock Testing the manifold hypothesis,
\newblock \emph{Journal of the American Mathematical Society}, \textbf{29}
  (2016), 983--1049.

\bibitem{Fischer2003}
\newblock B.~Fischer and J.~Buhmann,
\newblock Path-based clustering for grouping of smooth curves and texture
  segmentation,
\newblock \emph{IEEE Transactions on Pattern Analysis and Machine
  Intelligence}, \textbf{25} (2003), 513--518.

\bibitem{Har2016}
\newblock S.~Har-Peled,
\newblock Computing the k nearest-neighbors for all vertices via dijkstra,
\newblock \emph{arXiv preprint arXiv:1607.07818}.

\bibitem{Ho2003}
\newblock J.~Ho, M.-H. Yang, J.~Lim, K.-C. Lee and D.~Kriegman,
\newblock Clustering appearances of objects under varying illumination
  conditions,
\newblock in \emph{CVPR (1)}, 2003,
\newblock 11--18.

\bibitem{Howard2001}
\newblock C.~D. Howard and C.~Newman,
\newblock Geodesics and spanning trees for euclidean first-passage percolation,
\newblock \emph{Annals of Probability}, 577--623.

\bibitem{Hwang2016}
\newblock S.~Hwang, S.~Damelin and A.~Hero~{III},
\newblock Shortest path through random points,
\newblock \emph{The Annals of Applied Probability}, \textbf{26} (2016),
  2791--2823.

\bibitem{Jacobs2018}
\newblock M.~Jacobs, E.~Merkurjev and S.~Esedoḡlu,
\newblock Auction dynamics: A volume constrained mbo scheme,
\newblock \emph{Journal of Computational Physics}, \textbf{354} (2018),
  288--310.

\bibitem{Little2017}
\newblock A.~Little, M.~Maggioni and J.~Murphy,
\newblock Path-based spectral clustering: Guarantees, robustness to outliers,
  and fast algorithms,
\newblock \emph{arXiv preprint arXiv:1712.06206}.

\bibitem{Moscovich2017}
\newblock A.~Moscovich, A.~Jaffe and B.~Nadler,
\newblock Minimax-optimal semi-supervised regression on unknown manifolds,
\newblock in \emph{Artificial Intelligence and Statistics}, 2017,
\newblock 933--942.

\bibitem{Nene1996}
\newblock S.~Nene, S.~Nayar, H.~Murase et~al.,
\newblock Columbia object image library (coil-20).

\bibitem{Ng2002}
\newblock A.~Ng, M.~Jordan and Y.~Weiss,
\newblock On spectral clustering: Analysis and an algorithm,
\newblock in \emph{Advances in neural information processing systems}, 2002,
\newblock 849--856.

\bibitem{Orlitsky2005}
\newblock A.~Orlitsky and Sajama,
\newblock Estimating and computing density based distance metrics,
\newblock in \emph{Proceedings of the 22nd international conference on Machine
  learning},
\newblock ACM, 2005,
\newblock 760--767.

\bibitem{Tenenbaum2000}
\newblock J.~Tenenbaum, V.~De~Silva and J.~Langford,
\newblock A global geometric framework for nonlinear dimensionality reduction,
\newblock \emph{science}, \textbf{290} (2000), 2319--2323.

\bibitem{Vincent2003}
\newblock P.~Vincent and Y.~Bengio,
\newblock Density-sensitive metrics and kernels,
\newblock in \emph{Snowbird Learning Workshop}, 2003.

\bibitem{Yin2018}
\newblock K.~Yin and X.-C. Tai,
\newblock An effective region force for some variational models for learning
  and clustering,
\newblock \emph{Journal of Scientific Computing}, \textbf{74} (2018), 175--196.

\bibitem{Zelnik2005}
\newblock L.~Zelnik-Manor and P.~Perona,
\newblock Self-tuning spectral clustering,
\newblock in \emph{Advances in neural information processing systems}, 2005,
\newblock 1601--1608.

\end{thebibliography}
\end{document}